\documentclass[10pt]{article}

\usepackage{natbib}
\usepackage[T1]{fontenc}
\usepackage{lmodern}
\usepackage{fancyhdr}
\setcitestyle{authoryear,round,citesep={;},aysep={,},yysep={;}}

\makeatletter
\def\addcontentsline#1#2#3{}

\newlength\aftertitskip
\newlength\beforetitskip
\newlength\interauthorskip
\newlength\aftermaketitskip
\def\@startauthor{\noindent \normalsize\bf}
\def\@endauthor{}
\def\addr{\small\it}
\def\email{\hfill\small\it}
\def\name{\normalsize\bf}
\def\AND{\@endauthor\rm\hss \vskip \interauthorskip \@startauthor}

\def\maketitle{\par
\begingroup
   \def\thefootnote{\fnsymbol{footnote}}
   \def\@makefnmark{\hbox to 0pt{$^{\@thefnmark}$\hss}}
   \long\def\@makefntext##1{\parindent 1em\noindent
                            \hbox to1.8em{\hss $\m@th ^{\@thefnmark}$}##1}
   \@maketitle \@thanks
\endgroup
\setcounter{footnote}{0}
\let\maketitle\relax \let\@maketitle\relax
\gdef\@thanks{}\gdef\@author{}\gdef\@title{}\let\thanks\relax}

\def\@maketitle{\vbox{\hsize\textwidth
{\LARGE\bf\sffamily \@title\par}\vskip \aftertitskip
\@startauthor \@author \@endauthor
\vskip 0.3in minus 0.1in}}

\renewenvironment{abstract}{\vskip.075in\centerline{\large\bf\sffamily
Abstract}\vspace{0.5ex}\begin{quote}}{\par\end{quote}\vskip 1ex}

\def\section{\@startsection {section}{1}{\z@}{-2.0ex plus
    -0.5ex minus -.2ex}{1.5ex plus 0.3ex
minus0.2ex}{\large\bf\raggedright\sffamily}}
\def\subsection{\@startsection{subsection}{2}{\z@}{-1.8ex plus
-0.5ex minus -.2ex}{0.8ex plus .2ex}{\normalsize\bf\raggedright\sffamily}}
\def\subsubsection{\@startsection{subsubsection}{3}{\z@}{-1.5ex
plus      -0.5ex minus -.2ex}{0.5ex plus
.2ex}{\normalsize\bf\raggedright\sffamily}}
\def\paragraph{\@startsection{paragraph}{4}{\z@}{1.5ex plus
0.5ex minus .2ex}{-1em}{\normalsize\bf}}
\def\subparagraph{\@startsection{subparagraph}{5}{\z@}{1.5ex plus
  0.5ex minus .2ex}{-1em}{\normalsize\bf}}

\skip\footins 9pt plus 4pt minus 2pt
\def\footnoterule{\kern-3pt \hrule width 12pc \kern 2.6pt}
\leftmargini\leftmargin
\def\@listi{\leftmargin\leftmargini}
\def\@listii{\leftmargin\leftmarginii
   \labelwidth\leftmarginii\advance\labelwidth-\labelsep
   \topsep 2pt plus 1pt minus 0.5pt
   \parsep 1pt plus 0.5pt minus 0.5pt
   \itemsep \parsep}
\def\@listiii{\leftmargin\leftmarginiii
    \labelwidth\leftmarginiii\advance\labelwidth-\labelsep
    \topsep 1pt plus 0.5pt minus 0.5pt
    \parsep \z@ \partopsep 0.5pt plus 0pt minus 0.5pt
    \itemsep \topsep}
\def\@listiv{\leftmargin\leftmarginiv
     \labelwidth\leftmarginiv\advance\labelwidth-\labelsep}
\def\@listv{\leftmargin\leftmarginv
     \labelwidth\leftmarginv\advance\labelwidth-\labelsep}
\def\@listvi{\leftmargin\leftmarginvi
     \labelwidth\leftmarginvi\advance\labelwidth-\labelsep}
\belowdisplayskip \abovedisplayskip
\makeatother

\usepackage{amsmath,amsfonts,bm}

\def\eqref#1{equation~\ref{#1}}

\def\1{\bm{1}}

\DeclareMathAlphabet{\mathsfit}{\encodingdefault}{\sfdefault}{m}{sl}
\SetMathAlphabet{\mathsfit}{bold}{\encodingdefault}{\sfdefault}{bx}{n}

\usepackage{hyperref}
\usepackage{url}
\usepackage{svg}
\usepackage[ruled,algo2e]{algorithm2e}
\usepackage{graphicx}
\usepackage{amsmath}
\usepackage{amssymb}
\usepackage{mathtools}
\usepackage{amsthm}
\usepackage{tabularray}
\usepackage{subcaption}
\usepackage{ulem}
\usepackage{multirow}
\usepackage{multicol}
\usepackage{placeins}
\newtheorem{assumption}{Assumption}
\newtheorem{theorem}{Theorem}
\newtheorem{lemma}{Lemma}
\newtheorem{corollary}{Corollary}
\newtheorem{remark}{Remark}
\usepackage{tabularx}
\usepackage{booktabs}
\usepackage{float}

\title{Decentralized SGD with Controlled Disagreement\\Finds Flatter Minima}

\author{\name Zesen Wang \email zesen@kth.se \\
      \addr Division of Decision and Control Systems\\
      KTH Royal Institute of Technology
      \AND
      \name Mikael Johansson \email mikaelj@kth.se \\
      \addr Division of Decision and Control Systems\\
      KTH Royal Institute of Technology}

\begin{document}

\maketitle

\begin{abstract}
Decentralized training is often regarded as inferior to centralized training because the consensus errors between workers are thought to undermine convergence and generalization.
This work challenges this view by introducing decentralized SGD with Adaptive Consensus (\textsc{DSGD-AC}), which uses a time-dependent scaling mechanism to maintain consensus errors throughout the training.
We show that adaptive consensus changes the stationary variance of disagreement modes by balancing two effects: it preserves consensus-error magnitude through weaker graph damping while still allowing curvature-dependent damping to shape the disagreement directions.
This balance can produce a stronger Hessian-weighted loss-envelope penalty around the deployed model, even when normalized Hessian alignment is weaker than in standard DSGD.
Empirical results on image classification show that \textsc{DSGD-AC} reaches flatter solutions and higher test accuracy than standard DSGD and even centralized SGD.
Together, these results support consensus errors as a useful implicit regularizer and open a new perspective on the design of decentralized learning algorithms.
\end{abstract}

\section{Introduction}

In large-scale deep learning, decentralized training lets workers exchange model parameters only with neighbors, avoiding the cost of global synchronization and all-reduce communication \citep{abadi2016tensorflow, li2020pytorch}. This reduces latency and eliminates single points of failure, making decentralized approaches attractive for geographically distributed systems \citep{dhasade2023decentralized, gholami2024digest} and GPU clusters \citep{lian2017can, assran2019stochastic, wang2025promise}.

Despite its practical appeal, decentralized training methods such as the decentralized stochastic gradient descent (DSGD) are widely regarded as inferior to centralized training in terms of convergence and generalization, even when data distributions of workers are i.i.d. This gap was largely attributed to consensus errors, the persistent discrepancies in the model parameters between different workers \citep{alghunaim2022unified,zhu2022topology}. Prior work has therefore focused on reducing these errors through improved communication topologies \citep{ying2021exponential,takezawa2023beyond} and algorithm designs \citep{pu2021distributed, wang2019slowmo,lin2021quasi} to make decentralized training closer to its centralized counterpart.

However, this perspective neglects the potential constructive role of consensus errors. Rather than acting as detrimental noise, these discrepancies may serve as structured perturbations that encourage preference for flatter minima in the loss landscape, solutions that strongly correlate with better generalization \citep{jiang2019fantastic}. This view draws inspiration from sharpness-aware minimization strategies \citep{foret2020sharpness, bisla2022low, li2024friendly}, which improve model robustness and generalization by explicitly introducing curvature-aware perturbations.

In this study, we challenge the conventional view by introducing Decentralized SGD with Adaptive Consensus (DSGD-AC), an algorithm that strategically maintains consensus errors through an adaptive, time-dependent scaling mechanism.
Our theory characterizes the stationary variance of disagreement modes and shows how the adaptive consensus factor balances the magnitude and curvature profile of consensus errors, inducing a stronger Hessian-weighted penalty around the deployed global average. We verify the theory with comprehensive empirical results and diagnostics on Hessian information. DSGD-AC incurs negligible additional computational cost relative to standard SGD or DSGD, while retaining the communication efficiency and runtime advantages of decentralized training.

The main contributions of this work are threefold. First, we propose DSGD-AC, an adaptive consensus algorithm that maintains theoretically motivated consensus errors, finds flatter minima, and improves generalization on deep learning tasks at minimal computational cost. Second, we provide theoretical analysis and empirical diagnostics of the modal variance and consensus error radius, showing that the controlled errors induce a stronger Hessian-weighted penalty and are associated with flatter solutions. Third, we evaluate DSGD-AC against strong baselines on deep learning benchmarks, verifying its practical effectiveness.

\subsection{Related work}

\paragraph{Canonical view of consensus errors} The prevailing view in decentralized training is that models should minimize the consensus errors and approximate centralized training as closely as possible. To mitigate discrepancies among local models caused by weakly connected networks, prior work has focused on tracking global information \citep{wang2019slowmo, pu2021distributed, yuan2021decentlam, takezawa2022momentum}, enhancing communication topologies to improve convergence rates \citep{ying2021exponential, zhu2022topology, takezawa2023beyond}, and more. 
In addition, several theoretical studies \citep{zhu2022topology, alghunaim2022unified} establish a theoretical connection between the connectivity of decentralized communication topologies and both convergence and generalization, demonstrating that weaker connectivity results in poorer outcomes on both fronts.
In contrast, we demonstrate the constructive role of consensus errors by showing that controlled disagreement can induce a meaningful Hessian-weighted loss-envelope penalty, and we propose DSGD-AC as a practical way to exploit this effect in deep learning tasks.

\paragraph{Explorations beyond the canonical view} Since the canonical perspective dominates, research into the potential benefits of consensus errors remains sparse. \cite{kong2021consensus} empirically identified error thresholds and noted advantages in specific training phases; however, they did not explore regimes where errors exceed those of ring-topology DSGD, and their control scheme showed limited gains. \cite{zhu2023decentralized} offers a novel interpretation, framing consensus errors in DSGD as random perturbations around the global average, which are asymptotically equivalent to average-direction SAM \citep{bisla2022low}. Our work further identifies how the magnitude and Hessian-weighted effect of consensus errors are shaped by gradient noise and curvature-dependent damping, and, by proposing DSGD-AC, shows how decentralized training can use disagreement constructively beyond the large-batch setting.

\paragraph{Explicit curvature-related perturbations} If the global average is taken as the deployed model \citep{zhu2023decentralized}, decentralized learning can be interpreted as training the average model, and worker deviations act as perturbations around it. Sharpness-aware minimization (SAM) was first proposed by \cite{foret2020sharpness} to improve the generalization of deep neural networks, and many variants \citep{kwon2021asam, bisla2022low, liu2022random, li2024revisiting, luo2024explicit} were developed to improve it further. However, to achieve the best performance, these algorithms typically require additional gradient evaluations and increase the computational cost significantly. Our work uses consensus errors as free perturbations that can enhance generalization without introducing extra computation or memory than DSGD.

\section{Problem setting and notation}

We consider decentralized training in a data center where the full dataset is accessible to all workers. The training uses a standard distributed data sampler: at the beginning of each epoch, the dataset is reshuffled and evenly partitioned across workers, yielding i.i.d. data distributions in expectation. This setting corresponds to the standard decentralized optimization regime studied by \cite{assran2019stochastic, ying2021exponential, kong2021consensus, zhu2023decentralized, wang2025promise}.

\paragraph{Decentralized optimization}
We consider 
$n$ workers collaborating to train a $d$-dimensional model. Let $[n]$ denote the set of integers $\{1, 2, \dots, n\}$. 
Each worker $i\in [n]$ holds a local objective determined by its local dataset $\mathcal{D}_i$:
\begin{equation}
    f_i(x)=\mathbb{E}_{s\sim \mathcal{D}_i}[f_i(x;s)].
\end{equation}
Under the i.i.d. setting considered here, all workers share the same underlying objective but operate on different mini-batches drawn from the shared dataset. Each worker maintains its own local model $x_i$. The workers collaboratively solve
\begin{equation}\label{eq:do-obj}
    \begin{array}[c]{rl}
    \underset{\{x_1,x_2,\cdots,x_n\}}{\text{minimize}}&  F(x_1,\cdots,x_n)=\frac{1}{n}\sum_{i=1}^n f_i(x_i),\\
    \text{subject to } & x_i=x_j,\qquad \forall i,j\in[n],
    \end{array}
\end{equation}
where the constraint enforces consensus across workers.

\paragraph{Decentralized SGD (DSGD)} The update of DSGD \citep{lian2017can} on worker $i$ is:
\begin{equation}\label{eq:dsgd-update}
    x_i^{(t)}=x_i^{(t-1)}-\alpha^{(t)}\nabla f(x_i^{(t-1)};s_i^{(t)})+\sum_{j\in\mathcal{N}(i)} W_{ij}(x_j^{(t-1)}-x_i^{(t-1)})
\end{equation}
where $\mathcal{N}(i)$ is the neighbor set of worker $i$ (including itself), $W$ is a symmetric, non-negative, doubly stochastic matrix defining 
with $W_{ij}=0$ if $i\not\in \mathcal{N}_j)$, and $\smash{s_i^{(t)}}$ is the  mini-batch sampled by worker $i$ at iteration $t$.

Following standard notation, we denote the global average by $\bar{x}^{(t)}:=\frac{1}{n}\sum_{i=1}^n x_i^{(t)}$, the consensus errors $\delta_i^{(t)}:=x_i^{(t)}-\bar{x}^{(t)}$ and the matrix forms $X^{(t)}:=[x_1^{(t)},\cdots,x_n^{(t)}]^\top$,  $G^{(t)}:=[g_1^{(t)},g_2^{(t)},\cdots,g_n^{(t)}]^\top$,  $\Delta^{(t)}=[\delta_1^{(t)},\cdots,\delta_n^{(t)}]^\top$, and 
$\bar{X}^{(t)}=[\bar{x}^{(t)},\cdots, \bar{x}^{(t)}]^\top$. We also denote the center projector $P=I-\frac{1}{n}11^\top$ and the graph Laplacian $L=I-W$. Then, the DSGD update in Eq. (\ref{eq:dsgd-update}) can be written in matrix form
\begin{equation}\label{eq:dsgd-update-matrix}
    X^{(t)}=X^{(t-1)}-\alpha^{(t)}G^{(t)}+(W-I)X^{(t-1)}
\end{equation}
We focus on the adapt-while-communication variant, which has been shown to be more runtime efficient \citep{lian2017can,assran2019stochastic,wang2025promise} and to share the same generalization bound as the adapt-then-communication variant \citep{bellet2023improved}.

\subsection{Assumptions}

Our theoretical analysis relies on the following assumptions.

\begin{assumption}[Twice-differentiable objectives]
\label{assump:td-obj}
    The objective functions $f_i$ are at least twice differentiable.
\end{assumption}

\begin{assumption}[Decentralized communication topology]
\label{assump:topo}
    The graph is connected and undirected. $W$ is non-negative, symmetric, and doubly stochastic. 
\end{assumption}

\begin{assumption}[I.i.d. data distributions]
\label{assump:iid}
    All workers have access to the full dataset, so that
    \begin{equation*}
        f_1=f_2=\cdots=f_n=f
    \end{equation*}
\end{assumption}

\begin{assumption}[Martingale-difference gradient noise]
\label{assump:noise}
    Let $g_i^{(t)}$ be the mini-batch gradient on worker $i$. Under $f_i=f$, its first-order expansion around $\bar{x}^{(t-1)}$ is
    \begin{equation*}
        g_i^{(t)}
        =
        \nabla f(\bar{x}^{(t-1)})
        +
        H^{(t)}\delta_i^{(t-1)}
        +
        r_i^{(t)}
        +
        \xi_i^{(t)},
    \end{equation*}
    where $H^{(t)}=\nabla^2 f(\bar{x}^{(t-1)})$ is the Hessian, $r_i^{(t)}$ the Taylor residual, and $\xi_i^{(t)}$ mini-batch noise. In matrix form,
    \begin{equation*}
        G^{(t)}=\mathbf{1}\nabla f(\bar{x}^{(t-1)})^\top +  \Delta^{(t-1)}H^{(t)} + R^{(t)} + \Xi^{(t)}.
    \end{equation*}
    Let $\mathcal{F}_{t-1}$ be the information available before sampling the mini-batches at iteration $t$.
    The noise is conditionally mean zero:
    \begin{equation*}
        \mathbb{E}\left[\Xi^{(t)}\mid\mathcal{F}_{t-1}\right]=0
    \end{equation*}
\end{assumption}

\begin{assumption}[Frozen Hessian local window]
\label{assump:frozen-hessian}
    For the modal analysis, we study the local linearized dynamics in a short late-training window, setting $H^{(t)}=H_f$ and dropping the Taylor residual $R^{(t)}$.
    All modal variance and radius statements below refer to this local model.
\end{assumption}

\section{DSGD-AC and motivating observations}

This section motivates, develops, and analyzes DSGD-AC in three steps. We first show that consensus errors in DSGD vanish as the learning rate decays and argue that this limits its potential in generalization. We then propose DSGD-AC as a remedy and show empirically that it reaches flatter minima and higher test accuracy. Finally, we provide a theoretical analysis of the consensus-error dynamics, characterizing how the adaptive factor controls the magnitude and curvature profile of the maintained errors. All empirical claims are supported by training WRN16-8  \citep{zagoruyko2016wide} on CIFAR-100 \citep{krizhevsky2009learning} with 8 workers in a one-peer ring topology under cosine annealing with linear warm-up (see Figure~\ref{fig:d2c1}). Additional hyperparameter details are given in Appendix~\ref{app:hyper-details}.


\subsection{Vanishing consensus error in DSGD}

The conventional view of consensus errors as a nuisance was challenged by \citet{zhu2023decentralized}, who argued that one should view worker deviations $\delta_i^{(t)}$ as random perturbations around the global average $\bar{x}^{(t)}$, and proved that DSGD is asymptotically equivalent to average-direction SAM \citep{bisla2022low}. Under this interpretation, consensus errors provide implicit sharpness-aware regularization at no additional cost. However, the practical benefit of this observation is limited: unlike SAM, which maintains a fixed perturbation radius $\rho$, the consensus-error radius $\frac{1}{n}\sum_{i=1}^n\|\delta_i^{(t)}\|$ vanishes asymptotically as the learning rate decays \citep{lian2017can, zhu2022topology}. The implicit regularization thus disappears precisely when it is most needed, in the final stages of training where the model converges. This is confirmed empirically in Figure~\ref{fig:d2c2}.



To see why the consensus-error radius vanishes, it is instructive to view DSGD as stochastic gradient descent on the per-step surrogate $J^{(t)}$, defined as
\begin{equation}\label{eq:dsgd-actual-obj}
    \begin{aligned}
        J^{(t)}(x_1,\cdots,x_n)=\underbrace{\sum_{i=1}^nf_i(\bar{x}^{(t)})}_{\text{objective on deployed model}}+\underbrace{\sum_{i=1}^n [f_i(x_i^{(t)})-f_i(\bar{x}^{(t)})]}_{\text{sharpness}} +\underbrace{\frac{1}{4\alpha^{(t)}}\sum_{i,j\in[n]}W_{ij}\Vert x_i^{(t)}-x_j^{(t)}\Vert^2}_{\text{consensus regularizer}}
    \end{aligned}
\end{equation}

Differentiating $J^{(t)}$ with respect to $x_i$ and substituting back into the gradient descent update recovers the DSGD update in Eq.~(\ref{eq:dsgd-update}) exactly. The sharpness penalty measures the curvature seen by the workers: when the local models scatter away from $\bar{x}^{(t)}$, this term grows and penalizes sharp regions of the loss. As $\alpha^{(t)}\to 0$, however, the consensus regularizer weight $1/(4\alpha^{(t)})$ diverges and dominates $J^{(t)}$, forcing $x_i^{(t)}\to\bar{x}^{(t)}$ for all $i$. The sharpness penalty vanishes with the consensus errors, and the only remaining term is $\sum_i f_i(\bar{x}^{(t)})$, the same objective minimized by synchronous SGD. Maintaining a non-vanishing consensus-error radius throughout training is therefore necessary to preserve the potential regularization effect.


\begin{figure}[t]
    \centering
    \begin{subfigure}[t]{0.49\textwidth}
        \centering
        \includegraphics[width=0.9\linewidth]{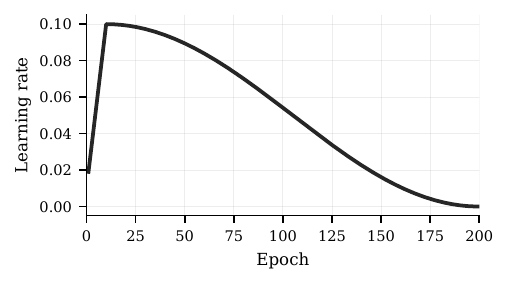}
        \subcaption{Cosine annealing learning rate schedule}
        \label{fig:d2c1}
    \end{subfigure}
    \hfill
    \begin{subfigure}[t]{0.49\linewidth}
        \centering
        \includegraphics[width=0.9\linewidth]{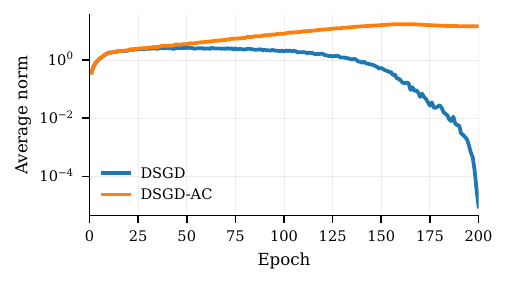}
        \subcaption{Norm of consensus errors evaluated across epochs}
        \label{fig:d2c2}
    \end{subfigure}
    \caption{Decentralized training of WRN16-8 on CIFAR-100 with 8 workers and the one-peer ring topology.}
\end{figure}

\begin{figure}[t]
    \centering
    \begin{subfigure}[t]{0.49\linewidth}
        \centering
        \includegraphics[width=0.9\linewidth]{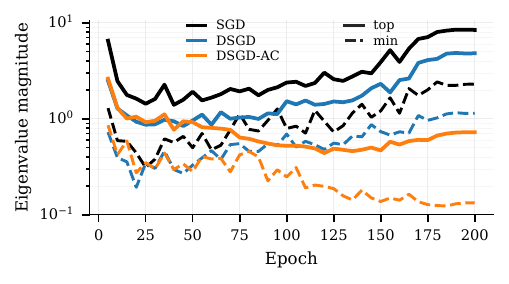}
        \subcaption{Hessian eigenvalue magnitudes measured by the Lanczos algorithm with 30 iterations}
        \label{fig:showcase-eigenvalues}
    \end{subfigure}
    \hfill
    \begin{subfigure}[t]{0.49\textwidth}
        \centering
        \includegraphics[width=0.9\linewidth]{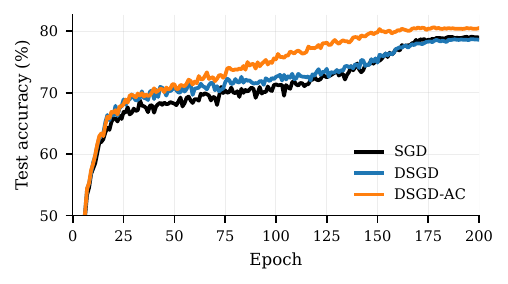}
        \subcaption{Test accuracy evaluated at the deployed models (global averaged center for decentralized training)}
        \label{fig:showcase-accuracy}
    \end{subfigure}
    \caption{Training performance and curvature for WRN16-8 on CIFAR-100 using 8 workers and the one-peer ring topology. DSGD-AC improves test accuracy and reaches a flatter local region than DSGD and synchronous SGD.}
    \label{fig:showcase-performance}
\end{figure}

\subsection{DSGD with adaptive consensus}

To maintain a non-vanishing perturbation radius throughout training, we propose  DSGD with adaptive consensus (DSGD-AC), detailed in Algorithm~\ref{alg:dsgdac}. The only modification to DSGD is a time-varying scalar factor $\gamma^{(t)}\in[0,1]$ that scales the consensus step at each iteration. In terms of the surrogate $J^{(t)}$ in Eq. (\ref{eq:dsgd-actual-obj}), this changes the consensus-regularizer weight from  $1/(4\alpha^{(t)})$ to $\gamma^{(t)}/(4\alpha^{(t)})$, preventing it from dominating as $\alpha^{(t)}\to 0$. The modified lines are highlighted in Algorithm~\ref{alg:dsgdac}, and the algorithm takes the global average $\bar{x}^{(t)}$ as the deployed model.


\begin{algorithm2e}
    \caption{Decentralized SGD with adaptive consensus (DSGD-AC) on worker $i$}
    \label{alg:dsgdac}
    \SetAlgoLined
    \KwData{Dataset ($D$), the number of workers ($N$), the number of epochs ($E$), the number of batches per epoch ($T$), initialization ($x^{(0)}$), and \colorbox{yellow!30}{a hyperparameter ($p\in\mathbb{R}^+$)}.}

    \KwResult{\colorbox{yellow!30}{Deployed model $\bar{x}=\frac{1}{n}\sum_{j=1}^n x_j^{(TE)}$}}

    $x_1^{(0)}=x_2^{(0)}=\cdots=x_n^{(0)}=x^{(0)}$

    \For{$t=1$ to $TE$} {
        $g_i^{(t)}=\nabla f(x_i^{(t-1)};s_i^{(t)})$

        \colorbox{yellow!30}{$\gamma^{(t)}=\left[\alpha^{(t)}/\alpha_{\max}\right]^p$}

        $x_i^{(t)}=x_i^{(t-1)}-\alpha^{(t)}g_i^{(t)}+$\colorbox{yellow!30}{$\gamma^{(t)}$}$ \sum_{j\in\mathcal{N}(i)} W_{ij}(x_j^{(t-1)}-x_i^{(t-1)})$
    }
\end{algorithm2e}

Note that $\alpha^{(t)}$ is determined by a learning rate scheduler such as cosine annealing \citep{loshchilov2016sgdr}, and $\alpha_{\max}$ is the maximal learning rate throughout the training, which ensures $\gamma^{(t)}$ is in the range $[0,1]$.
The schedule $\gamma^{(t)}=[\alpha^{(t)}/\alpha_{\max}]^p$ 
ties the consensus factor directly to the learning rate, so the two decay together. We use $p=3$ in the showcase experiments, which is chosen based on hyperparameter tuning.

As shown in Figure~\ref{fig:d2c2}, DSGD-AC keeps the consensus errors non-vanishing after the learning-rate decay, while also reaching a smaller top Hessian eigenvalue, a narrower Hessian spectrum, and improved test accuracy over DSGD and even synchronous SGD (Figures~\ref{fig:showcase-eigenvalues} and \ref{fig:showcase-accuracy}).
The experiments in Section~\ref{sec:exp} show that these gains persist across additional models, worker counts, and communication topologies.


\subsection{Empirical Hessian diagnostics}\label{sec:hessian-diagnostics}

We next investigate why maintaining larger disagreement does not simply degrade training. The loss envelope around the deployed average model $\bar{x}^{(t)}$ separates the effect of the worker deviations $\delta_i^{(t)}$ from the objective value at the average. A second-order expansion gives
\begin{equation}\label{eq:loss-envelope}
    \sum_{i=1}^n f(\bar{x}^{(t)}+\delta_i^{(t)})
    \approx
    n f(\bar{x}^{(t)})
    +
    \frac{1}{2}
    \sum_{i=1}^n(\delta_i^{(t)})^\top H^{(t)}\delta_i^{(t)},
\end{equation}
where the first-order terms cancel because $\sum_i\delta_i^{(t)}=0$, and the remainder is negligible when the consensus-error radius is small relative to the curvature scale of $f$. The quadratic term is a Hessian-weighted penalty on the consensus errors, concentrated on positive-curvature directions.

To assess this effect empirically, we track three quantities defined in terms of
the Hessian $H^{(t)}:=\nabla^2 f(\bar{x}^{(t)})$, its largest eigenvalue
$\lambda_1(H^{(t)})$, and the consensus errors $\Delta^{(t)}$.
The first diagnostic is the \emph{curvature exposure}, the Hessian-weighted consensus-error penalty from the loss envelope in Eq. (\ref{eq:loss-envelope}):
\begin{equation}\label{eq:consensus-alignment}
    Q_t
    :=
    \frac{\operatorname{tr}\!\left(\Delta^{(t)}H^{(t)}(\Delta^{(t)})^\top\right)}
         {\lambda_1(H^{(t)})}
    =
    \sum_{i=1}^n
    \frac{(\delta_i^{(t)})^\top H^{(t)}\delta_i^{(t)}}{\lambda_1(H^{(t)})}.
\end{equation}
The second is the \emph{normalized alignment score}, separating the
directional component of $Q_t$ from its radius: 
\begin{equation}\label{eq:normalized-consensus-alignment}
    A_t
    :=
    \frac{\operatorname{tr}\!\left(\widetilde{\Delta}^{(t)}H^{(t)}
    (\widetilde{\Delta}^{(t)})^\top\right)}{\lambda_1(H^{(t)})},
\end{equation}
Here, 
$\widetilde{\Delta}^{(t)}:=\Delta^{(t)}/\|\Delta^{(t)}\|_F$ so $Q_t = \|\Delta^{(t)}\|_F^2 \cdot A_t$, decomposing the penalty into
radius and alignment. The third is the \emph{random-direction baseline}, the
expected normalized alignment of Rademacher directions,
\begin{equation}\label{eq:random-direction-baseline}
    B_t
    :=
    \frac{\mathbb{E}_{u}\!\left[u^\top H^{(t)}u/\|u\|^2\right]}
         {\lambda_1(H^{(t)})}
    =
    \frac{\operatorname{tr}(H^{(t)})}{d\cdot\lambda_1(H^{(t)})},
\end{equation}
estimated using 50 Hutchinson trace probes. Together, $Q_t$, $A_t$, and $B_t$
diagnose how the consensus errors interact with the local curvature at the
deployed model; all three are normalized by $\lambda_1(H^{(t)})$ to remove
loss-scaling bias and facilitate comparison across runs. Further evaluation
details are in Appendix~\ref{appsec:eval-details}.



\begin{figure}[t]
    \centering
    \begin{subfigure}[t]{0.32\textwidth}
        \centering
        \includegraphics[width=\linewidth]{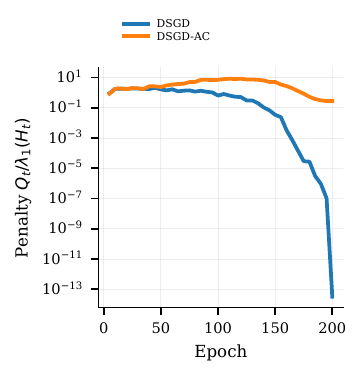}
        \subcaption{Curvature exposure $Q_t$.}
        \label{fig:vhv-penalty-panel}
    \end{subfigure}
    \hfill
    \begin{subfigure}[t]{0.32\linewidth}
        \centering
        \includegraphics[width=\linewidth]{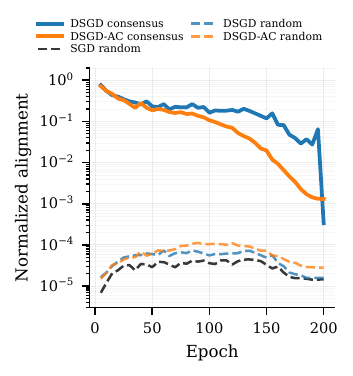}
        \subcaption{Alignment $A_t$ and baseline $B_t$.}
        \label{fig:vhv-alignment-panel}
    \end{subfigure}
    \hfill
    \begin{subfigure}[t]{0.32\linewidth}
        \centering
        \includegraphics[width=\linewidth]{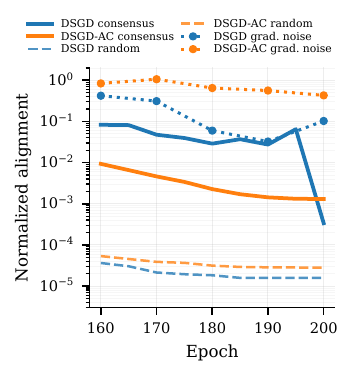}
        \subcaption{Gradient-noise alignment.}
        \label{fig:noise-alignment-panel}
    \end{subfigure}
    \caption{Consensus-error Hessian diagnostics for WRN16-8 on CIFAR-100: (a) normalized and unnormalized penalties, (b) consensus and random-direction alignment, and (c) late-training gradient-noise alignment. Except for $\mathcal{E}_t$, all quantities are normalized by the top Hessian eigenvalue.}
    \label{fig:vhv-diagnostics}
\end{figure}

The three-way comparison in Figure~\ref{fig:vhv-diagnostics} is instructive. In standard DSGD, the consensus-error radius collapses as $\alpha^{(t)}\to 0$, and $Q_t$ collapses with it despite maintained alignment $A_t$. $B_t$ shows that random directions have poor Hessian alignment, so their curvature exposure is low.
DSGD-AC occupies a different position: its normalized alignment $A_t$ is no stronger than DSGD, but, by preventing radius collapse, it maintains a substantially larger $Q_t$ than either alternative. The preference of DSGD-AC for flatter minima is therefore better understood as preservation of curvature exposure than as improvement of Hessian alignment.

This rules out two simpler explanations. DSGD-AC does not find flatter minima by aligning its consensus direction more strongly with the top Hessian eigenspace than DSGD. Nor do the consensus errors act as random noise: Figure~\ref{fig:vhv-alignment-panel} shows that $A_t$ remains well above the random baseline $B_t$ for both methods throughout late training. The maintained above-random alignment is instead consistent with the anisotropic structure of SGD noise in deep networks, as examined in the following section.

\subsection{Modal explanation of controlled disagreement}\label{sec:radius-control}

The empirical diagnostics in Section~\ref{sec:hessian-diagnostics} show that DSGD-AC preserves curvature exposure by maintaining a non-vanishing consensus-error radius. We now explain the mechanism theoretically. The central message is simple: decreasing $\gamma$ selectively preserves disagreement in low-curvature directions, while Hessian damping continues to suppress sharp directions. The analysis proceeds in three steps. Lemma~\ref{lemma:modal-dynamics} diagonalizes the disagreement dynamics into independent graph-Hessian modes. Theorem~\ref{theorem:modal-variance} gives each mode's stationary variance. Corollaries~\ref{corollary:radius-control} and \ref{corollary:gamma-schedule} translate these into radius predictions for $\gamma=\alpha^p$. Proofs are in Appendix~\ref{app:proof}.

We start from the matrix form of DSGD-AC, analogous to Eq. (\ref{eq:dsgd-update-matrix}):
\begin{equation}\label{eq:matrix-dsgdac}
    \begin{aligned}
        X^{(t)}&=X^{(t-1)}-\alpha^{(t)}G^{(t)}+\gamma^{(t)}(W-I)X^{(t-1)}\\
        &= (I-\gamma^{(t)}L)X^{(t-1)}-\alpha^{(t)}G^{(t)}
    \end{aligned}
\end{equation}

Multiplying by the center projector $P$ removes the average model and isolates the disagreement $\Delta^{(t)}=PX^{(t)}$. Since $W$ is doubly stochastic under Assumption~\ref{assump:topo},  $PL=LP$ and we find the disagreement recursion
\begin{equation}\label{eq:error-dynamic}
    \Delta^{(t)}= P(I-\gamma^{(t)}L)X^{(t-1)}-\alpha^{(t)}PG^{(t)} = (I-\gamma^{(t)}L)\Delta^{(t-1)}-\alpha^{(t)}PG^{(t)}
\end{equation}

Equation~(\ref{eq:error-dynamic}) already shows the two competing forces: mixing damps disagreement through $\gamma^{(t)}L$, while the worker-wise stochastic gradients continually inject disagreement through $PG^{(t)}$.
To see which Hessian directions receive this injected energy, we linearize the gradients around the deployed average model and diagonalize both the graph Laplacian and the Hessian.

\begin{lemma}[Linearized modal dynamics]\label{lemma:modal-dynamics}
    Under Assumptions~\ref{assump:td-obj}--\ref{assump:frozen-hessian}, the 
    local linearized dynamics in a late-training window decouple exactly into 
    independent scalar recursions. Let 
    $L=V\operatorname{diag}(\mu_1,\ldots,\mu_n)V^\top$ with 
    $2\geq\mu_1\geq\cdots\geq\mu_{n-1}>\mu_n=0$, and let 
    $H_f=U\operatorname{diag}(\lambda_1,\ldots,\lambda_d)U^\top$ with 
    $\lambda_1\geq\cdots\geq\lambda_d$. Define $z_{j,k}^{(t)}=v_j^\top\Delta^{(t)}u_k$ 
    and $\xi_{j,k}^{(t)}=v_j^\top\Xi^{(t)}u_k$. Then each graph-Hessian mode 
    satisfies
    \begin{equation}
        z_{j,k}^{(t)}
        =
        (1-\gamma^{(t)}\mu_j-\alpha^{(t)}\lambda_k)z_{j,k}^{(t-1)}
        -\alpha^{(t)}\xi_{j,k}^{(t)}.
    \end{equation}
\end{lemma}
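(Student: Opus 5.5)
We start from the disagreement recursion in Eq.~(\ref{error-dynamic}), $\Delta^{(t)}=(I-\gamma^{(t)}L)\Delta^{(t-1)}-\alpha^{(t)}PG^{(t)}$, and substitute the gradient expansion of Assumption~\ref{assump:noise}. The expansion has four pieces, and we handle them one at a time.

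\emph{Step 1: simplify $PG^{(t)}$.} Apply $P$ to each piece.
\begin{itemize}
\item The mean-gradient term $\mathbf{1}\nabla f(\bar{x}^{(t-1)})^\top$ is annihilated, since $P\mathbf{1}=0$.
\item The Taylor residual $R^{(t)}$ is dropped by Assumption~\ref{assump:frozen-hessian}, which also sets $H^{(t)}=H_f$.
\item The curvature term becomes $P\Delta^{(t-1)}H_f=\Delta^{(t-1)}H_f$, because $\mathbf{1}^\top\Delta^{(t-1)}=0$ (the deviations sum to zero), so $P\Delta^{(t-1)}=\Delta^{(t-1)}$.
\end{itemize}
This leaves the exact linear matrix recursion
\begin{equation*}
\Delta^{(t)}=(I-\gamma^{(t)}L)\Delta^{(t-1)}-\alpha^{(t)}\Delta^{(t-1)}H_f-\alpha^{(t)}P\Xi^{(t)}.
\end{equation*}
It is a two-sided linear map: $L$ acts on the worker index from the left, and $H_f$ acts on the parameter index from the right.

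\emph{Step 2: diagonalize.} Assumption~\ref{assump:topo} makes $W$ symmetric and doubly stochastic, so $L=I-W$ is symmetric with spectrum in $[0,2]$. Connectivity makes the zero eigenvalue simple, with eigenvector $v_n=\mathbf{1}/\sqrt{n}$, which gives the stated ordering. $H_f$ is symmetric by Assumption~\ref{assump:td-obj}, so it has an orthonormal eigenbasis $U$.

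Left-multiply the recursion by $v_j^\top$ and right-multiply by $u_k$, and use the eigenvector identities:
\begin{itemize}
\item $v_j^\top L=\mu_j v_j^\top$, so $v_j^\top(I-\gamma L)\Delta u_k=(1-\gamma\mu_j)z_{j,k}$.
\item $H_f u_k=\lambda_k u_k$, so $v_j^\top\Delta H_f u_k=\lambda_k z_{j,k}$.
\end{itemize}
Adding these gives the coefficient $1-\gamma^{(t)}\mu_j-\alpha^{(t)}\lambda_k$.

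\emph{Step 3: the noise term.} We need $v_j^\top P=v_j^\top$. This holds for $j<n$, since those eigenvectors are orthogonal to $\mathbf{1}$ and $P$ is the identity on $\mathbf{1}^\perp$. Hence $v_j^\top P\Xi^{(t)}u_k=\xi^{(t)}_{j,k}$.

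For $j=n$, $z_{n,k}\equiv0$ identically, because $\mathbf{1}^\top\Delta=0$. The stated recursion with $\xi_{n,k}$ rather than $0$ would then be inconsistent. This is the one delicate point, and I would handle it by noting that the consensus mode $j=n$ is trivial and excluded, so the claimed recursion holds for $j\le n-1$. Alternatively, $\xi$ can be understood as $v_j^\top P\Xi u_k$, which coincides with the stated definition for all non-consensus modes.

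\emph{Decoupling.} Since $V$ and $U$ are orthogonal, $\{z_{j,k}\}$ is an orthonormal change of coordinates for $\Delta$ ($\Delta=VZU^\top$). Each scalar recursion involves only its own mode, so the dynamics decouple exactly.

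The main obstacle is mostly bookkeeping rather than analysis: justifying that the left and right actions commute in the modal coordinates, and dealing cleanly with the $j=n$ / projector subtlety. All remaining steps are direct substitutions.
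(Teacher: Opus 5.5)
Your proposal is correct and matches the paper's proof essentially step for step. You use $P\mathbf{1}=0$, $P\Delta^{(t-1)}=\Delta^{(t-1)}$ and the frozen-Hessian assumption to reach $\Delta^{(t)}=(I-\gamma^{(t)}L)\Delta^{(t-1)}-\alpha^{(t)}\Delta^{(t-1)}H_f-\alpha^{(t)}P\Xi^{(t)}$, then project with $v_j^\top$ on the left and $u_k$ on the right. Your observation that the recursion holds only for the disagreement modes $j\le n-1$ (since $z_{n,k}\equiv 0$ while $\xi_{n,k}$ need not vanish) is exactly the restriction the paper's proof also imposes through $Pv_j=v_j$.
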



The scalar recursion in Lemma~\ref{lemma:modal-dynamics} is the basic mechanism behind DSGD-AC, and explains many of the empirical observations in Section~\ref{sec:hessian-diagnostics}. The effective damping of mode $(j,k)$ is $d_{j,k}:=\gamma\mu_j+\alpha\lambda_k$, the sum of a graph term and a curvature term. Decreasing $\gamma$ weakens only the graph damping, while the positive Hessian curvature continues to suppress sharp directions independently. Theorem~\ref{theorem:modal-variance} makes this precise by computing the stationary variance of each mode as a function of $d_{j,k}$, the noise variance $\sigma_{j,k}^2$, and the learning rate $\alpha$.

\begin{theorem}[Stationary modal variance]\label{theorem:modal-variance}
    In the setting of Lemma~\ref{lemma:modal-dynamics}, fix constant $\alpha$ and $\gamma$. Assume the conditional modal noise variance is constant in the 
    local window:
    \begin{equation*}
        \sigma_{j,k}^2:=
        \mathbb{E}\left[\left(\xi_{j,k}^{(t)}\right)^2\mid\mathcal{F}_{t-1}\right].
    \end{equation*}
    Assume the stability condition $0<d_{j,k}<2$.
    When $d_{j,k}\leq 0$, the mode is unstable, and the variance diverges; this can occur for negative-curvature directions when $\alpha|\lambda_k|>\gamma\mu_j$. Under stability, for every disagreement mode $j\in\{1,\ldots,n-1\}$, the stationary modal variance is
    \begin{equation}\label{eq:stationary-modal-variance}
        \mathbb{E}[z_{j,k}^2]
        =
        \frac{\alpha^2\sigma_{j,k}^2}
        {d_{j,k}(2-d_{j,k})}.
    \end{equation}
\end{theorem}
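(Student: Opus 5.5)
The plan is to start from the scalar recursion of Lemma~\ref{lemma:modal-dynamics} with constant $\alpha,\gamma$. Write it as $z^{(t)} = a\,z^{(t-1)} - \alpha\xi^{(t)}$ with $a:=1-d_{j,k}$, suppressing the mode indices. Squaring gives
\begin{equation*}
(z^{(t)})^2 = a^2 (z^{(t-1)})^2 - 2a\alpha\, z^{(t-1)}\xi^{(t)} + \alpha^2(\xi^{(t)})^2 .
\end{equation*}
We then take the conditional expectation given $\mathcal{F}_{t-1}$. The variable $z^{(t-1)}=v_j^\top\Delta^{(t-1)}u_k$ is $\mathcal{F}_{t-1}$-measurable. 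The conditional mean $\mathbb{E}[\xi^{(t)}\mid\mathcal{F}_{t-1}]=v_j^\top\mathbb{E}[\Xi^{(t)}\mid\mathcal{F}_{t-1}]u_k=0$ by Assumption~\ref{assump:noise}. Hence the cross term vanishes. The last term equals $\alpha^2\sigma_{j,k}^2$ by the constant conditional variance hypothesis.

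Next we take total expectations, which yields the deterministic linear recursion
\begin{equation*}
m_t = a^2 m_{t-1} + \alpha^2\sigma_{j,k}^2, \qquad m_t:=\mathbb{E}[(z^{(t)})^2].
\end{equation*}
The stability condition $0<d_{j,k}<2$ is exactly $|a|<1$, so $a^2<1$ and the map is a contraction. Therefore $m_t\to m^\ast$ geometrically from any initialization with finite second moment. The limit is the unique fixed point, so it is also the stationary value:
\begin{equation*}
m^\ast=\frac{\alpha^2\sigma_{j,k}^2}{1-(1-d_{j,k})^2}=\frac{\alpha^2\sigma_{j,k}^2}{d_{j,k}(2-d_{j,k})}.
\end{equation*}
This is \eqref{eq:stationary-modal-variance}.

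For the restriction to $j\le n-1$, note that $\mu_n=0$ with $v_n\propto\mathbf{1}$. Since $\Delta^{(t)}=P X^{(t)}$, we have $v_n^\top\Delta^{(t)}=0$, so the consensus mode carries no disagreement and is excluded. For the instability remark, $d_{j,k}\le 0$ gives $a\ge 1$, so $a^2\ge1$. The recursion then gives $m_t\ge m_{t-1}+\alpha^2\sigma^2$, and $m_t\to\infty$ whenever $\sigma_{j,k}^2>0$. This happens when $\lambda_k<0$ with $\alpha|\lambda_k|\ge\gamma\mu_j$ (divergence needs $\alpha|\lambda_k|\ge\gamma\mu_j$, i.e.\ $d_{j,k}\le 0$, consistent with the statement's remark up to the boundary case).

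The only delicate point is the vanishing of the cross term. It requires that $z^{(t-1)}$ is determined before the mini-batch at step $t$ is drawn, and that the projections $v_j,u_k$ are deterministic under the frozen-Hessian window of Assumption~\ref{assump:frozen-hessian}. Both hold because $H_f$ and $L$ are fixed there. I expect no further obstacle. One could also note that stationarity holds in the sense of the limit of second moments, without requiring a full stationary distribution.
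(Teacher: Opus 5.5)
Your proposal is correct and follows essentially the same route as the paper: you square the scalar recursion $z^{(t)}_{j,k}=(1-d_{j,k})z^{(t-1)}_{j,k}-\alpha\xi^{(t)}_{j,k}$, remove the cross term using $\mathcal{F}_{t-1}$-measurability of $z^{(t-1)}_{j,k}$ and the conditional mean-zero noise, and solve $V=(1-d_{j,k})^2V+\alpha^2\sigma_{j,k}^2$ via $1-(1-d_{j,k})^2=d_{j,k}(2-d_{j,k})$. The only difference is a small gain in rigor on your side: you obtain the stationary value as the geometric limit of the deterministic moment recursion $m_t=(1-d_{j,k})^2m_{t-1}+\alpha^2\sigma_{j,k}^2$, whereas the paper asserts that a unique stationary distribution exists and then imposes stationarity, which is not justified when the noise is only a martingale difference with constant conditional variance.
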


Theorem~\ref{theorem:modal-variance} gives the exact stationary variance for fixed $\alpha$ and $\gamma$. Remark~\ref{remark:modal-orders} derives order-of-magnitude bounds that are more useful in practice, separating two regimes: modes where graph damping dominates (low curvature) and modes where Hessian damping dominates (high curvature). These bounds are the key tool for understanding how $\gamma$ shapes the radius.

\begin{remark}[Modal scaling bounds]\label{remark:modal-orders}
    Suppose the conditions of Theorem~\ref{theorem:modal-variance} hold and $d_{j,k}\leq d_{\max}<2$.
    Then
    \begin{equation}
        \frac{\alpha^2\sigma_{j,k}^2}{2d_{j,k}}
        \leq
        \mathbb{E}[z_{j,k}^2]
        \leq
        \frac{\alpha^2\sigma_{j,k}^2}{(2-d_{\max})d_{j,k}}.
    \end{equation}
    If a mode is mixing-dominated, meaning $|\alpha\lambda_k|\leq\rho\gamma\mu_j$ for a fixed $\rho\in[0,1)$, then
    \begin{equation}
        \mathbb{E}[z_{j,k}^2]
        =
        \Theta\left(\frac{\alpha^2\sigma_{j,k}^2}{\gamma\mu_j}\right).
    \end{equation}
    If $\lambda_k>0$, then
    \begin{equation}
        \mathbb{E}[z_{j,k}^2]
        =
        \Theta\left(\frac{\alpha^2\sigma_{j,k}^2}{\gamma\mu_j+\alpha\lambda_k}\right),
    \end{equation}
    so positive high-curvature directions are damped by the Hessian term. For negative-curvature directions, the same formula holds whenever $0<d_{j,k}<2$, but the Hessian term reduces $d_{j,k}$ and can increase the variance.
\end{remark}

The remark shows that low-curvature modes scale as $\Theta(\alpha^2/\gamma)$ while high-curvature modes are suppressed to $\mathcal{O}(\alpha)$. Corollary~\ref{corollary:radius-control} sums these contributions across modes to characterize the total disagreement radius, and identifies the condition under which the low-curvature contribution dominates.

\begin{corollary}[Conditional radius scaling from low-curvature modes]\label{corollary:radius-control}
    Let $\mathcal{S}_{\mathrm{low}}=\{(j,k):1\leq j\leq n-1,\ 
    |\alpha\lambda_k|\leq\rho\gamma\mu_j\}$ for some fixed $\rho\in[0,1)$, and 
    let $\mathcal{S}_{\mathrm{high}}^+=\{(j,k):1\leq j\leq n-1,\ \lambda_k\geq 
    c>0\}$ for a fixed curvature threshold $c$. The set $\mathcal{S}_\mathrm{low}$ 
    is non-empty only when genuinely flat directions exist, i.e., directions with 
    $|\lambda_k|\ll\gamma\mu_j/\alpha$. Define the active low-curvature modal 
    weight
    \begin{equation}
        M_{\mathrm{low}}(\alpha,\gamma)
        :=
        \sum_{(j,k)\in\mathcal{S}_{\mathrm{low}}}\frac{\sigma_{j,k}^2}{\mu_j}.
    \end{equation}
    If $M_{\mathrm{low}}(\alpha,\gamma)>0$ and the modes in 
    $\mathcal{S}_{\mathrm{low}}$ are stable, then
    \begin{equation}
        \sum_{(j,k)\in\mathcal{S}_{\mathrm{low}}}\mathbb{E}[z_{j,k}^2]
        =
        \Theta\left(\frac{\alpha^2}{\gamma}M_{\mathrm{low}}(\alpha,\gamma)\right).
    \end{equation}
    If $M_{\mathrm{low}}(\alpha,\gamma)$ is bounded above and below by positive 
    constants, this is $\Theta(\alpha^2/\gamma)$. If 
    $\sigma_{j,k}^2\leq\sigma_+^2<\infty$ on $\mathcal{S}_{\mathrm{high}}^+$ 
    and the corresponding modes are stable, then
    \begin{equation}
        \sum_{(j,k)\in\mathcal{S}_{\mathrm{high}}^+}\mathbb{E}[z_{j,k}^2]
        =
        O(\alpha).
    \end{equation}
    The low-curvature contribution dominates when $\alpha^2/\gamma=\Omega(\alpha)$, 
    i.e. when $\gamma=O(\alpha)$. For $\gamma=\alpha^p$, this holds for all 
    $p\geq1$; the practically relevant regime $p\geq2$ is analyzed in 
    Corollary~\ref{corollary:gamma-schedule}.
\end{corollary}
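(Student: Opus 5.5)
The plan is to derive everything from the exact formula of Theorem~\ref{theorem:modal-variance} and the two-sided bounds of Remark~\ref{remark:modal-orders}, applied mode by mode and then summed. No new dynamics are needed. The work is in bounding the damping $d_{j,k}=\gamma\mu_j+\alpha\lambda_k$ on each set by quantities that do not depend on the individual mode, so that the $\Theta$ constants are uniform. I will use a fixed $d_{\max}<2$ that bounds $d_{j,k}$ on all stable modes considered; this is implicit in the stability hypothesis once $\alpha,\gamma$ are at most small constants.

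\textbf{Low-curvature set.} For $(j,k)\in\mathcal{S}_{\mathrm{low}}$ we have $|\alpha\lambda_k|\le\rho\gamma\mu_j$. This gives
\[
(1-\rho)\gamma\mu_j\le d_{j,k}\le(1+\rho)\gamma\mu_j .
\]
Since $j\le n-1$, $\mu_j>0$, so $d_{j,k}>0$. Stability then gives $d_{j,k}<2$, and I take $d_{j,k}\le d_{\max}$. Substituting into $\alpha^2\sigma_{j,k}^2/(2d_{j,k})\le\mathbb{E}[z_{j,k}^2]\le\alpha^2\sigma_{j,k}^2/((2-d_{\max})d_{j,k})$ yields
\[
\frac{\alpha^2\sigma_{j,k}^2}{2(1+\rho)\gamma\mu_j}\le\mathbb{E}[z_{j,k}^2]\le\frac{\alpha^2\sigma_{j,k}^2}{(2-d_{\max})(1-\rho)\gamma\mu_j}.
\]
The constants depend only on $\rho$ and $d_{\max}$. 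Summing over $\mathcal{S}_{\mathrm{low}}$ gives exactly $\Theta\big((\alpha^2/\gamma)M_{\mathrm{low}}\big)$. If $M_{\mathrm{low}}$ is pinned between positive constants, this becomes $\Theta(\alpha^2/\gamma)$.

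\textbf{High-curvature set.} For $(j,k)\in\mathcal{S}^+_{\mathrm{high}}$, both terms of $d_{j,k}$ are nonnegative, so $d_{j,k}\ge\alpha\lambda_k\ge\alpha c$. The upper bound of Remark~\ref{remark:modal-orders} then gives
\[
\mathbb{E}[z_{j,k}^2]\le\frac{\alpha^2\sigma_+^2}{(2-d_{\max})\,\alpha c}=\frac{\alpha\,\sigma_+^2}{(2-d_{\max})c}.
\]
The set has at most $(n-1)d$ elements, a count independent of $\alpha$ and $\gamma$. Summing therefore gives $O(\alpha)$.

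\textbf{Comparison and the expected obstacle.} The final claim compares $\alpha^2/\gamma$ with $\alpha$: the ratio is $\alpha/\gamma$, which is $\Omega(1)$ iff $\gamma=O(\alpha)$. For $\gamma=\alpha^p$ with $\alpha\le1$ and $p\ge1$, we have $\gamma\le\alpha$. The main obstacle is uniformity of the constants. It requires a $d_{\max}<2$ that holds across the window, so stability must be read as uniform. It also requires the sum over $\mathcal{S}^+_{\mathrm{high}}$ to have a cardinality that does not grow, which is fine with $d$ fixed. Finally, $\mathcal{S}_{\mathrm{low}}$ itself depends on $(\alpha,\gamma)$. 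This is why the result is stated through $M_{\mathrm{low}}(\alpha,\gamma)$ rather than through a fixed constant, and I would say so explicitly rather than try to control how the set changes.
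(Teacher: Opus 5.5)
Your proposal is correct and follows essentially the same route as the paper. You use the per-mode bounds from Theorem~\ref{theorem:modal-variance} and Remark~\ref{remark:modal-orders} with a uniform $d_{\max}<2$, the sandwich $(1-\rho)\gamma\mu_j\le d_{j,k}\le(1+\rho)\gamma\mu_j$ on $\mathcal{S}_{\mathrm{low}}$, the bound $d_{j,k}\ge\alpha c$ on $\mathcal{S}_{\mathrm{high}}^+$, and summation over a finite index set. The differences are cosmetic: you make the $\Theta$ constants explicit, and the paper also invokes Parseval to link the modal sums to $\|\Delta\|_F^2$, which the statement as written does not need.
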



The set $\mathcal{S}_{\mathrm{low}}$ is non-empty whenever $H_f$ has a
nontrivial null space, since any almost-flat direction satisfies
$|\alpha\lambda_k|\leq \rho\gamma\mu_j$. More generally, low-rank Hessian
structure in neural networks motivates the presence of such modes \citep{keskar2016large, singh2021analytic, song2024does, ben2024high}.

The radius scaling in Corollary~\ref{corollary:radius-control} is stated for 
general $\gamma$. Corollary~\ref{corollary:gamma-schedule} specializes to the 
schedule $\gamma=\alpha^p$ used in DSGD-AC, which gives a direct prediction 
for how the maintained radius depends on $p$ as $\alpha$ decays.

\begin{corollary}[Conditional scaling under $\gamma=\alpha^p$]\label{corollary:gamma-schedule}
    Set $\gamma=\alpha^p$ up to a fixed constant factor, with $p\geq2$. For mixing-dominated modes, the active condition $|\lambda_k|\leq\rho\mu_j\alpha^{p-1}$ tightens as $\alpha\to0$, so the following applies only to modes that remain active. For such modes,
    \begin{equation}
        \mathbb{E}[z_{j,k}^2]
        =
        \Theta\left(\alpha^{2-p}\sigma_{j,k}^2/\mu_j\right).
    \end{equation}
    For $p=2$, the variance is of constant order in each active mode. For $p>2$, it grows as $\alpha\to0$; if unchecked, this can destabilize training, consistent with the performance degradation observed for $p>4$ in Table~\ref{tab:sensitivity_p}.
\end{corollary}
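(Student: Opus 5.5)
The plan is to reduce Corollary~\ref{corollary:gamma-schedule} to the mixing-dominated case of Remark~\ref{remark:modal-orders} by substituting $\gamma=c\,\alpha^p$, where $c>0$ is the fixed constant factor. First I make the active condition explicit. A mode is mixing-dominated when $|\alpha\lambda_k|\leq\rho\gamma\mu_j$. With $\gamma=c\alpha^p$ this reads $|\lambda_k|\leq\rho c\,\mu_j\alpha^{p-1}$. Since $p\geq2$, the exponent satisfies $p-1\geq1$, so the right-hand side goes to zero as $\alpha\to0$. This is the claimed tightening, and it justifies restricting attention to the modes that remain active at the current $\alpha$. For such a mode I would also check stability, so that Theorem~\ref{theorem:modal-variance} applies:
\begin{equation*}
(1-\rho)\gamma\mu_j\;\leq\; d_{j,k}\;\leq\;(1+\rho)\gamma\mu_j .
\end{equation*}
Because $\gamma\leq1$ and $\mu_j\leq2$, the bound $d_{j,k}<2$ holds once $(1+\rho)c\,\alpha^p\mu_j<2$. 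This is true for $\alpha$ small, or whenever $c\leq 1$ and $\rho<1$ leave some margin. The lower bound $d_{j,k}>0$ holds because $\mu_j>0$ for $j\leq n-1$ and $\rho<1$.

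Next I invoke the mixing-dominated bound of Remark~\ref{remark:modal-orders}, namely $\mathbb{E}[z_{j,k}^2]=\Theta(\alpha^2\sigma_{j,k}^2/(\gamma\mu_j))$. Concretely, I plug the two-sided bound on $d_{j,k}$ into the exact formula $\alpha^2\sigma_{j,k}^2/(d_{j,k}(2-d_{j,k}))$. This gives
\begin{equation*}
\frac{\alpha^2\sigma_{j,k}^2}{2(1+\rho)\gamma\mu_j}\;\leq\;\mathbb{E}[z_{j,k}^2]\;\leq\;\frac{\alpha^2\sigma_{j,k}^2}{(2-d_{\max})(1-\rho)\gamma\mu_j}.
\end{equation*}
Substituting $\gamma=c\alpha^p$ turns both sides into constants times $\alpha^{2-p}\sigma_{j,k}^2/\mu_j$. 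The constants depend only on $c$, $\rho$ and $d_{\max}$, so the $\Theta$ statement follows. The two special cases are then read off the exponent. For $p=2$ the factor $\alpha^{0}=1$ gives constant order per mode. For $p>2$ the exponent $2-p$ is negative, so $\alpha^{2-p}\to\infty$ as $\alpha\to0$, which is the claimed growth. The remark about destabilization and Table~\ref{tab:sensitivity_p} is interpretive and needs no proof beyond this.

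The main obstacle is uniformity. The $\Theta$ constants must not depend on $\alpha$, and this requires $d_{\max}$ to stay bounded away from $2$ and $\sigma_{j,k}^2$ to be treated as fixed in the local window. The first follows from $\gamma\to0$. The second is an assumption inherited from Theorem~\ref{theorem:modal-variance}, so I would state it explicitly rather than prove it. A further subtlety is that the set of active modes shrinks with $\alpha$. The statement therefore holds per mode, for each mode while it remains active, and not as a uniform sum. I would flag this and avoid claiming anything about aggregate radius beyond Corollary~\ref{corollary:radius-control}.
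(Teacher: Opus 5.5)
Your proposal is correct and follows the paper's proof. You substitute $\gamma=c\alpha^p$ into the mixing-dominated condition to get $|\lambda_k|\leq\rho c\,\mu_j\alpha^{p-1}$, invoke the mixing-dominated case of Remark~\ref{remark:modal-orders}, and read off the exponent $2-p$. Your explicit two-sided constants and uniformity caveats (fixed $\sigma_{j,k}^2$, $d_{\max}$ bounded away from $2$, per-mode rather than aggregate validity) only make explicit what the paper leaves inside the $\Theta$. One small correction: $d_{j,k}<2$ is secured by taking $\alpha$ small, or by the standing hypothesis $d_{j,k}\leq d_{\max}<2$. It does not follow from $c\leq1$ and $\rho<1$ alone, since $(1+\rho)\gamma\mu_j$ can approach $4$.
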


These results show the mechanism behind DSGD-AC.
Reducing $\gamma$ preserves disagreement magnitude in active low-curvature modes, while the Hessian term $\alpha\lambda_k$ continues to suppress positive high-curvature modes.
For $\gamma=\alpha^p$, $p=2$ maintains constant-order variance in surviving mixing-dominated modes, whereas $p>2$ increases their linearized stationary variance as $\alpha$ decays.
Because the active condition tightens over training, the maintained radius is filtered toward flatter directions rather than spread uniformly across the spectrum.

The curvature exposure $Q_t$ introduced in Section~\ref{sec:hessian-diagnostics} can be written in modal coordinates as
\begin{equation}
    Q_t
    =
    \sum_{i=1}^n(\delta_i^{(t)})^\top H_f\delta_i^{(t)}/\lambda_1(H_f)
    =
    \sum_{k=1}^d\lambda_k\sum_{j=1}^{n-1} (z_{j,k}^{(t)})^2 /\lambda_1(H_f).
\end{equation}
This decomposition separates the two effects identified empirically: the magnitude of the maintained disagreement, controlled by $\gamma$, and its distribution across Hessian eigen directions, controlled by the modal noise variances and curvature-dependent damping $d_{j,k}$. A method with weaker normalized Hessian alignment can still produce larger $Q_t$ if its radius is sufficiently larger — precisely what DSGD-AC achieves.

\begin{remark}[Source-noise anisotropy and Hessian alignment]\label{remark:noise-alignment}
    The modal formula in Remark~\ref{remark:modal-orders} implicitly explains why the above-random alignment $A_t > B_t$ observed in Figure~\ref{fig:vhv-alignment-panel} should not be attributed to adaptive consensus. If the injected noise $\sigma_{j,k}^2$ were isotropic in the Hessian eigen basis, the curvature term in $d_{j,k}=\gamma\mu_j+\alpha\lambda_k$ would damp high-curvature modes more strongly, and alignment would not exceed the random baseline.
    The observed alignment is instead consistent with anisotropic source noise: SGD noise in deep networks is known to concentrate energy in sharp directions~\citep{wu2022alignment,ziyin2022strength}, and Figure~\ref{fig:noise-alignment-panel} verifies this directly for late-training gradient noise. Worker disagreements in both DSGD and DSGD-AC are driven by this structured noise, and graph mixing and curvature-dependent damping determine how much of each injected mode survives.
\end{remark}

\section{Numerical Experiments}\label{sec:exp}

In this section, we present extensive numerical experiments on image classification using Wide ResNets \citep{zagoruyko2016wide} trained on CIFAR-10 and CIFAR-100 \citep{krizhevsky2009learning} under a variety of distributed setups. We follow the hyperparameter settings from the original papers and reproduce comparable baseline performance to ensure fair comparisons.
All models are trained for 200 epochs.
The global batch size is set to 128 for 8 workers and is linearly scaled with the number of workers; mixed-precision training is enabled by default.
Since the networks use batch normalization layers and the training was not explicitly done on the global average center, we perform a calibration pass over the training set before evaluation, following \cite{defazio2024road}\footnote{\url{https://github.com/facebookresearch/schedule_free}}, which yields more reliable test performance.
We report mean $\pm$ standard deviation from 3 repeat runs, and shaded regions in the plots indicate 95\% confidence intervals. 
All experiments extend the decentralized training framework of \citet{wang2025promise},\footnote{\url{https://github.com/WangZesen/Decent-DP}}, which overlaps decentralized communication with computation to hide communication latency. Additional training and evaluation details can be found in Appendix~\ref{app:hyper-details}.


This work focuses on DSGD with adaptive consensus, where the base optimizer is SGD without preconditioning. Appendix~\ref{app:dadam-exp} reports additional experiments with Adam~\citep{kingma2014adam} as the base optimizer for Transformer models~\citep{vaswani2017attention} on WMT-14~\citep{bojar2014findings}, showing that adaptive consensus extends to preconditioned optimizers.


\subsection{Ablation study and sensitivity analysis}\label{sec:sensitivity}

\begin{table}
\begin{minipage}[t]{0.47\textwidth}
\centering
\resizebox{\linewidth}{!}{%
{
\begin{tabular}{c c c c}
\toprule
$p$ & \textbf{Test Acc.} (\%) $\uparrow$
    & \textbf{Train Loss} $\downarrow$
    & \textbf{Test Loss} $\downarrow$ \\
\midrule
0 & 96.04 {\tiny± 0.21} & \uline{0.0006} {\tiny± 0.0000} & 0.182 {\tiny± 0.007} \\
1 & 96.36 {\tiny± 0.06} & 0.0007 {\tiny± 0.0000} & 0.160 {\tiny± 0.002} \\
2 & 96.48 {\tiny± 0.13} & 0.0017 {\tiny± 0.0001} & 0.150 {\tiny± 0.002} \\
3 & \uline{96.55} {\tiny± 0.10} & 0.0163 {\tiny± 0.0005} & 0.122 {\tiny± 0.002} \\
4 & 96.41 {\tiny± 0.04} & 0.0347 {\tiny± 0.0010} & \uline{0.118} {\tiny± 0.001} \\
5 & 96.19 {\tiny± 0.05} & 0.0475 {\tiny± 0.0013} & 0.121 {\tiny± 0.002} \\
\bottomrule
\end{tabular}%
}}
\captionof{table}{Sensitivity analysis of parameter $p$ in the WRN28-10 on CIFAR-10 experiments with 16 workers, one-peer ring topology, and $E_{\text{start}}=100$. The best metric values are \uline{underlined}.}
\label{tab:sensitivity_p}
\end{minipage}
\hfill
\begin{minipage}[t]{0.505\textwidth}
\centering
\resizebox{\linewidth}{!}{%
{
\begin{tabular}{c c c c}
\toprule
$E_{\text{start}}$ & \textbf{Test Acc.} (\%) $\uparrow$
    & \textbf{Train Loss} $\downarrow$
    & \textbf{Test Loss} $\downarrow$ \\
\midrule
10  & 96.16 {\tiny± 0.06} & 0.0474 {\tiny± 0.0011} & 0.121 {\tiny± 0.003} \\
50  & 96.31 {\tiny± 0.12} & 0.0395 {\tiny± 0.0006} & 0.121 {\tiny± 0.003} \\
75  & 96.33 {\tiny± 0.09} & 0.0290 {\tiny± 0.0005} & \uline{0.119} {\tiny± 0.002} \\
100 & \uline{96.55} {\tiny± 0.10} & 0.0163 {\tiny± 0.0005} & 0.122 {\tiny± 0.002} \\
150 & 96.49 {\tiny± 0.08} & 0.0021 {\tiny± 0.0003} & 0.150 {\tiny± 0.002} \\
175 & 96.21 {\tiny± 0.12} & \uline{0.0008} {\tiny± 0.0001} & 0.181 {\tiny± 0.003} \\
\bottomrule
\end{tabular}%
}}
\captionof{table}{Sensitivity analysis of $E_{\text{start}}$ in the WRN28-10 on CIFAR-10 experiments with 16 workers, one-peer ring topology, and $p=3$. The best metric values are \uline{underlined}.}
\label{tab:start_epoch_tune}
\end{minipage}

\end{table}

\begin{figure}[t]
    \centering
    \includegraphics[width=0.45\linewidth]{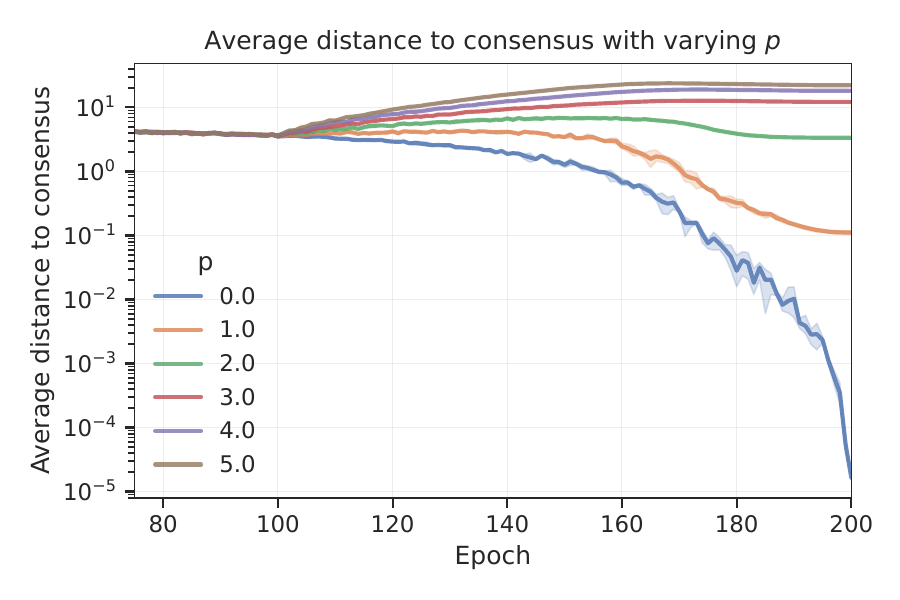}
    \caption{Average norm of consensus errors over epochs with varying $p$ in the WRN28-10 on CIFAR-10 experiments with 16 workers, the one-peer ring topology, and $E_{\text{start}}=100$.}
    \label{fig:sensitivity_p_d2c}
\end{figure}

\paragraph{Start epoch $E_\text{start}$}
We introduce $E_\text{start}$ as the epoch from which the adaptive consensus mechanism is activated. Before $E_\text{start}$, $\gamma$ is set to~1 and DSGD-AC reduces to standard DSGD. In setups with more workers or weaker connectivity, decentralization already induces substantial consensus errors in early training, and reducing $\gamma$ below~1 at this stage further weakens mixing, hindering loss optimization before the model has converged to a stable region. Activating too late, on the other hand, leaves too few epochs for the regularisation to act. Table~\ref{tab:start_epoch_tune} shows this two-sided sensitivity: for 16 workers on the one-peer ring topology, $E_\text{start}=100$ gives the best test accuracy, with performance degrading at both extremes. As a practical heuristic, $E_\text{start}$ should follow the warm-up phase and scale with worker count and topology: more workers and weaker connectivity require a later activation.

\paragraph{Parameter $p$}
Table~\ref{tab:sensitivity_p} presents the sensitivity analysis with respect to $p$. The results indicate that the best test accuracy and test loss are achieved for $p\in[2,4]$.
Since DSGD-AC reduces to DSGD when $p=0$, these experiments also serve as an ablation study of the adaptive consensus mechanism.
With $p>0$, DSGD-AC always achieves better performance than DSGD. Figure~\ref{fig:sensitivity_p_d2c} presents the average norm of the consensus errors with various $p$. 
Consistent with the analysis in Section~\ref{sec:radius-control}, the consensus errors persist for $p\geq 2$, and the regularization and generalization improvement become weak with a too-large $p$ because of worse alignment.


Appendix~\ref{appsec:sensitivity} includes the training curves of experiments in the sensitivity analysis.

\subsection{Tuned results with various setups}

\begin{table}[t]
\centering
\resizebox{\linewidth}{!}{
\begin{tblr}{
  cells = {c},
  cell{1}{1} = {r=2}{},
  cell{1}{2} = {r=2}{},
  cell{1}{3} = {r=2}{},
  cell{1}{4} = {c=3}{},
  cell{3}{1} = {r=9}{},
  cell{3}{2} = {r=3}{},
  cell{3}{4} = {r=3}{},
  cell{6}{2} = {r=3}{},
  cell{6}{4} = {r=3}{},
  cell{12}{1} = {r=9}{},
  cell{9}{2} = {r=3}{},
  cell{9}{4} = {r=3}{},
  cell{12}{2} = {r=3}{},
  cell{12}{4} = {r=3}{},
  cell{15}{2} = {r=3}{},
  cell{15}{4} = {r=3}{},
  cell{18}{2} = {r=3}{},
  cell{18}{4} = {r=3}{},
  hline{1,21} = {-}{0.10em},
  hline{2} = {4-6}{0.02em},
  hline{3,12} = {-}{0.06em},
  hline{6,9,15,18} = {2-6}{0.02em},
}
\textbf{Model}     & \textbf{\#W} & \textbf{Topo.} & \textbf{Algorithms} (\textbf{Test Accuracy} (\%) $\uparrow$ / \textbf{Test Loss} $\downarrow$ / \textbf{Top-1 Eigenvalue} $\downarrow$)                   &                              &                              \\
                     &         &          & Synchronous SGD                          & Decentralized SGD                         & DSGD-AC (ours)                      \\
28-10 & 8       & ring     & 80.00 {\tiny± 0.19} / 1.097 {\tiny± 0.014} / 4.53 {\tiny± 0.72} & 79.71 {\tiny± 0.13} / 1.108 {\tiny± 0.011} / 4.35 {\tiny± 1.19} & \uline{82.12} {\tiny± 0.12} / \uline{0.897} {\tiny± 0.007} / 0.59 {\tiny± 0.02} \\
                     &         & exp      &                              & 79.98 {\tiny± 0.33} / 1.100 {\tiny± 0.018} / 4.60 {\tiny± 1.20} & 81.82 {\tiny± 0.08} / 0.911 {\tiny± 0.013} / 0.53 {\tiny± 0.04} \\
                     &         & complete &                              & 79.62 {\tiny± 0.05} / 1.114 {\tiny± 0.014} / 4.13 {\tiny± 0.85} & 81.41 {\tiny± 0.31} / 0.953 {\tiny± 0.001} / \uline{0.47} {\tiny± 0.09} \\
                     & 16      & ring     & 79.79 {\tiny± 0.25} / 1.111 {\tiny± 0.021} / 6.75 {\tiny± 0.97} & 80.18 {\tiny± 0.14} / 1.050 {\tiny± 0.015} / 2.89 {\tiny± 1.01} & \uline{82.19} {\tiny± 0.17} / \uline{0.846} {\tiny± 0.012} / 0.51 {\tiny± 0.02} \\
                     &         & exp      &                              & 80.18 {\tiny± 0.28} / 1.066 {\tiny± 0.003} / 3.31 {\tiny± 0.82} & 82.09 {\tiny± 0.03} / 0.886 {\tiny± 0.003} / 0.57 {\tiny± 0.16} \\
                     &         & complete &                              & 79.81 {\tiny± 0.29} / 1.094 {\tiny± 0.007} / 4.11 {\tiny± 0.40} & 81.64 {\tiny± 0.10} / 0.934 {\tiny± 0.010} / \uline{0.45} {\tiny± 0.03} \\
                     & 32      & ring     & 79.90 {\tiny± 0.19} / 1.090 {\tiny± 0.019} / 4.02 {\tiny± 0.69} & 80.72 {\tiny± 0.34} / 0.950 {\tiny± 0.019} / 4.15 {\tiny± 0.61} & \uline{81.92} {\tiny± 0.13} / \uline{0.831} {\tiny± 0.006} / 0.62 {\tiny± 0.12} \\
                     &         & exp      &                              & 80.56 {\tiny± 0.21} / 1.000 {\tiny± 0.006} / 3.58 {\tiny± 1.06} & \uline{81.92} {\tiny± 0.04} / 0.886 {\tiny± 0.011} / \uline{0.37} {\tiny± 0.04} \\
                     &         & complete &                              & 80.04 {\tiny± 0.12} / 1.060 {\tiny± 0.011} / 3.50 {\tiny± 1.09} & 81.58 {\tiny± 0.15} / 0.925 {\tiny± 0.001} / 0.42 {\tiny± 0.09} \\
16-8  & 8       & ring     & 79.19 {\tiny± 0.13} / 0.997 {\tiny± 0.007} / 6.81 {\tiny± 2.34} & 79.16 {\tiny± 0.36} / 0.986 {\tiny± 0.010} / 3.51 {\tiny± 0.60} & \uline{80.54} {\tiny± 0.31} / 0.910 {\tiny± 0.016} / 0.76 {\tiny± 0.03} \\
                     &         & exp      &                              & 78.90 {\tiny± 0.14} / 1.002 {\tiny± 0.012} / 4.71 {\tiny± 0.33} & 80.51 {\tiny± 0.08} / \uline{0.902} {\tiny± 0.004} / \uline{0.75} {\tiny± 0.04} \\
                     &         & complete &                              & 79.12 {\tiny± 0.22} / 0.989 {\tiny± 0.012} / 8.89 {\tiny± 1.88} & 80.08 {\tiny± 0.20} / 0.931 {\tiny± 0.002} / 1.18 {\tiny± 0.76} \\
                     & 16      & ring     & 79.02 {\tiny± 0.27} / 1.012 {\tiny± 0.014} / 6.32 {\tiny± 1.21} & 79.07 {\tiny± 0.22} / 0.979 {\tiny± 0.008} / 4.01 {\tiny± 1.36} & \uline{80.51} {\tiny± 0.11} / 0.861 {\tiny± 0.015} / 0.56 {\tiny± 0.01} \\
                     &         & exp      &                              & 79.02 {\tiny± 0.14} / 0.982 {\tiny± 0.011} / 4.91 {\tiny± 0.37} & 80.44 {\tiny± 0.08} / \uline{0.889} {\tiny± 0.006} / 0.57 {\tiny± 0.02} \\
                     &         & complete &                              & 79.03 {\tiny± 0.22} / 1.001 {\tiny± 0.012} / 5.32 {\tiny± 2.60} & 80.01 {\tiny± 0.06} / 0.919 {\tiny± 0.016} / \uline{0.51} {\tiny± 0.01} \\
                     & 32      & ring     & 78.90 {\tiny± 0.14} / 1.010 {\tiny± 0.009} / 8.41 {\tiny± 1.13} & 79.12 {\tiny± 0.24} / 0.976 {\tiny± 0.006} / 2.95 {\tiny± 0.39} & 80.17 {\tiny± 0.10} / \uline{0.872} {\tiny± 0.005} / 0.72 {\tiny± 0.21}\\
                     &         & exp      &                              & 79.13 {\tiny± 0.04} / 0.974 {\tiny± 0.004} / 3.17 {\tiny± 0.19} & \uline{80.22} {\tiny± 0.16} / 0.921 {\tiny± 0.009} / \uline{0.51} {\tiny± 0.03} \\
                     &         & complete &                              & 79.04 {\tiny± 0.19} / 0.988 {\tiny± 0.014} / 2.81 {\tiny± 0.36} & 79.73 {\tiny± 0.13} / 0.955 {\tiny± 0.006} / 0.52 {\tiny± 0.01}
\end{tblr}}
\caption{Performance comparison of synchronous SGD, decentralized SGD, and DSGD-AC. Dataset: CIFAR-100. Model: WRN28-10 and WRN16-8. Number of workers: 8, 16, 32. Topology: one-peer ring, one-peer exponential graph \citep{ying2021exponential}, and complete graph. $p=3$ for all DSGD-AC experiments. $E_\text{start}$ is set to 10, 100 and 150 for 8-worker, 16-worker and 32-worker setups, respectively. The best metric values for each distributed setup are \uline{underlined}.}\label{tab:algo_comparison}
\end{table}

In Table~\ref{tab:algo_comparison}, we summarize the experiment results of DSGD-AC with tuned $p$ and $E_\text{start}$ based on Tables~\ref{tab:sensitivity_p} and \ref{tab:start_epoch_tune}, and the comparisons with SGD and DSGD in various setups. The algorithms are applied on training WRN28-10 and WRN16-8 on the CIFAR-100 dataset. The number of workers varies from 8 to 32, and the topologies are one-peer ring, one-peer exponential, and complete graph. According to the sensitivity analysis, we choose $p=3$ for all DSGD-AC experiments, and $E_\text{start}=$10, 100, and 150 for 8-worker, 16-worker, and 32-worker setups, respectively.
Since the best sharpness metric remains an open question, we report the top-1 Hessian eigenvalue as a
surrogate, which has been empirically shown to correlate strongly with generalization~\citep{bisla2022low, luo2024explicit}.

Across all evaluated setups, DSGD-AC achieves consistently higher test accuracy and lower test loss than both synchronous SGD and decentralized SGD (even under their optimal setups).
These results validate the effectiveness and robustness of DSGD-AC across models, topologies, and system scales.
The corresponding training curves can be found in Appendix~\ref{appsec:curves}. Finally, we validate that DSGD-AC specifically targets flatter minima as DSGD-AC significantly reduces the dominant Hessian eigenvalue compared to standard DSGD and synchronous SGD. This confirms that the adaptive consensus errors provide implicit sharpness-aware regularization similar to SAM \citep{foret2020sharpness}, but without the $2\times$ computational overhead required by explicit sharpness minimization methods (see Appendix~\ref{appsec:sam} for details).

\subsection{Training efficiency}
Table~\ref{tab:train-time} shows the training time with WRN28-10 and CIFAR-100 using 1, 2, and 4 $8\times$T4 nodes interconnected by 100Gbps InfiniBand. Thanks to the decentralized communication pattern, decentralized training can hide the communication time and provide more resilience to stragglers, which makes it more efficient than synchronous training even under the single-node setup \citep{lian2017can, assran2019stochastic, wang2025promise}.
In these experiments, decentralized methods require only $0.78$–$0.85\times$ the training time of synchronous SGD, and the relative speedup further improves as the number of workers increases. Compared with DSGD, DSGD-AC introduces negligible additional runtime overhead.

\begin{table}[t]
\centering
\resizebox{0.7\linewidth}{!}{%
{
\begin{tabular}{c c c c}
\toprule
\textbf{\# Nodes} & \textbf{SGD} & \textbf{DSGD} & \textbf{DSGD-AC} \\
\midrule
1 & 91.50 ± 0.21 ($\times 1.000$) & 77.69 ± 0.45 ($\times 0.849$) & 78.11 ± 0.52 ($\times 0.854$) \\
2 & 48.38 ± 0.11 ($\times 1.000$) & 39.09 ± 0.19 ($\times 0.808$) & 39.69 ± 0.14 ($\times 0.820$) \\
4 & 25.78 ± 0.07 ($\times 1.000$) & 20.08 ± 0.12 ($\times 0.779$) & 20.28 ± 0.14 ($\times 0.787$) \\
\bottomrule
\end{tabular}%
}}
\caption{Training time (in minutes) of SGD, DSGD, and DSGD-AC in the WRN28-10 on CIFAR-100 experiments. The communication topology for decentralized methods is the one-peer ring topology. $\times x$ is the relative time compared with synchronous SGD in the same distributed setup.}\label{tab:train-time}
\end{table}

\section{Conclusion and discussion}\label{sec:conclusion}

Prior work on decentralized training has focused on reducing consensus errors, viewing them as an obstacle to convergence and generalization. This work argues the opposite: consensus errors, maintained at a controlled level, act as an implicit regularizer and improve generalization over both standard DSGD and synchronous SGD at negligible additional cost. The key theoretical contribution is a modal analysis of the consensus-error dynamics, which characterizes how the adaptive factor $\gamma^{(t)}$ balances radius and curvature profile through the stationary variance of graph-Hessian modes. This analysis introduces curvature exposure as the quantity that governs the regularization effect, and shows that DSGD-AC preserves it better than standard DSGD. Together with the scalability inherited from decentralized training, it positions DSGD-AC as a compelling alternative to synchronous SGD for distributed training.


\paragraph{Limitation and future work} Two limitations merit attention. First, DSGD-AC achieves a weaker normalized Hessian alignment than standard DSGD, as shown in Figure~\ref{fig:vhv-diagnostics}. The analysis in Section~\ref{sec:hessian-diagnostics} shows this does not undermine the curvature exposure argument, since the larger radius compensates. However, it suggests that the linear mixing operator with a decaying scalar factor is not optimal: a mixing operator designed to preserve alignment while maintaining radius could improve the regularization effect further. Second, the mechanism relies on the anisotropy of SGD gradient noise, which aligns naturally with the Hessian eigen space. Adaptive optimizers such as Adam \citep{kingma2014adam} modulate this noise, disrupting the anisotropy and weakening the alignment of consensus errors with sharp directions. Extending adaptive consensus to such optimizers is therefore non-trivial and remains an open problem. Whether DSGD-AC finds different basins of attraction than DSGD, and what role escape dynamics play in the observed generalization improvement, is a further open question.

\subsubsection*{Broader Impact Statement}
The proposed method may contribute to more efficient distributed training by improving the generalization performance of decentralized optimization without requiring extra gradient evaluations or substantial memory overhead. Such efficiency gains could reduce communication costs and, in some settings, lower the computational resources needed to reach a target accuracy. However, efficiency improvements may also encourage larger-scale training runs, potentially offsetting environmental benefits. We therefore view DSGD-AC as a tool for improving the quality-efficiency trade-off, rather than as a standalone solution to the environmental costs of modern deep learning.

\bibliography{main}
\bibliographystyle{arxiv_natbib}

\appendix
\section{Appendix}

\subsection{Proofs and derivations}\label{app:proof}

\subsubsection{Proof of Lemma~\ref{lemma:modal-dynamics}}

\textit{Proof.} 
Since $P\mathbf{1}=0$ and $P\Delta^{(t-1)}=\Delta^{(t-1)}$ (as $\sum_i\delta_i^{(t-1)}=0$ by the 
definition of $\Delta$), applying Assumption~\ref{assump:noise} to the disagreement
recursion Equation~\ref{eq:error-dynamic} and invoking Assumption~\ref{assump:frozen-hessian}
($H^{(t)}=H_f$, $R^{(t)}=0$) gives
%
\begin{equation}\label{eq:lemma1-1}
    \begin{aligned}
        \Delta^{(t)} &= (I-\gamma^{(t)}L)\Delta^{(t-1)}-\alpha^{(t)}PG^{(t)} \\
        &= (I-\gamma^{(t)}L)\Delta^{(t-1)}- \alpha^{(t)} P (\mathbf{1}\nabla f(\bar{x}^{(t-1)})^\top + \Delta^{(t-1)}H^{(t)}+R^{(t)}+\Xi^{(t)}) \\
        &= (I-\gamma^{(t)}L)\Delta^{(t-1)}- \alpha^{(t)} P (\Delta^{(t-1)}H^{(t)} + R^{(t)}+\Xi^{(t)}) \\
        &= (I-\gamma^{(t)}L)\Delta^{(t-1)} -\alpha^{(t)} \Delta^{(t-1)}H_f - \alpha^{(t)} P \Xi^{(t)}.
    \end{aligned}
\end{equation}
Multiplying Eq.~(\ref{eq:lemma1-1}) by $u_k$ on the right and using $H_f u_k=\lambda_k u_k$ gives
\begin{equation}
    \Delta^{(t)}u_k
    =
    (I-\gamma^{(t)}L-\alpha^{(t)}\lambda_k I)\Delta^{(t-1)}u_k
    -\alpha^{(t)}P\Xi^{(t)}u_k.
\end{equation}
For every disagreement mode $j\in\{1,\ldots,n-1\}$, $Pv_j=v_j$ and $Lv_j=\mu_jv_j$.
Multiplying by $v_j^\top$ on the left yields 
and setting
$z_{j,k}^{(t)}:=v_j^\top\Delta^{(t)}u_k$, $\xi_{j,k}^{(t)}:=v_j^\top\Xi^{(t)}u_k$
yields the claimed recursion.
\qed

\subsubsection{Proof of Theorem~\ref{theorem:modal-variance}}

\textit{Proof.}
For fixed $\alpha$ and $\gamma$, Lemma~\ref{lemma:modal-dynamics} gives
\begin{equation}
    z_{j,k}^{(t)}
    =
    a_{j,k}z_{j,k}^{(t-1)}-\alpha\xi_{j,k}^{(t)},
    \quad
    a_{j,k}:=1-d_{j,k}
\end{equation}
Since $z_{j,k}^{(t-1)}$ is $\mathcal{F}_{t-1}$-measurable and
$\mathbb{E}[\xi_{j,k}^{(t)}\mid\mathcal{F}_{t-1}]=0$, squaring and taking 
conditional expectation yields
\begin{equation}
    \mathbb{E}[(z_{j,k}^{(t)})^2\mid\mathcal{F}_{t-1}]
    =
    a_{j,k}^2(z_{j,k}^{(t-1)})^2
    +
    \alpha^2\sigma_{j,k}^2.
\end{equation}
The condition $0<d_{j,k}<2$ is equivalent to $|a_{j,k}|<1$, so the scalar autoregressive process is stable and admits a unique stationary distribution with finite second moment. In particular, imposing stationarity  
$\mathbb{E}[(z_{j,k}^{(t)})^2]=\mathbb{E}[(z_{j,k}^{(t-1)})^2]=V$ implies that 
\begin{equation}
    V = a_{j,k}^2\,V + \alpha^2\sigma_{j,k}^2.
\end{equation}
Since $\vert a_{j,k}\vert <1$  implies $1-a_{j,k}^2\neq 0$,  
this equation has the unique solution 
\begin{equation}
    V
    = \frac{\alpha^2\sigma_{j,k}^2}{1-a_{j,k}^2}
    = \frac{\alpha^2\sigma_{j,k}^2}{d_{j,k}(2-d_{j,k})},
\end{equation}
where the last equality uses $1-a_{j,k}^2=(1-a_{j,k})(1+a_{j,k})=d_{j,k}(2-d_{j,k})$.
\qed

\subsubsection{Derivation for Remark~\ref{remark:modal-orders}}

\textit{Derivation.}
Since $0<d_{j,k}\leq d_{\max}<2$, the bounds $2-d_{\max}\leq 2-d_{j,k}\leq 2$ applied
to Theorem~\ref{theorem:modal-variance} give
\begin{equation}
    \frac{\alpha^2\sigma_{j,k}^2}{2d_{j,k}}
    \leq
    \mathbb{E}[z_{j,k}^2]
    \leq
    \frac{\alpha^2\sigma_{j,k}^2}{(2-d_{\max})d_{j,k}}.
\end{equation}
For mixing-dominated modes, $|\alpha\lambda_k|\leq\rho\gamma\mu_j$ implies
\begin{equation}
    (1-\rho)\gamma\mu_j
    \leq d_{j,k}
    \leq
    (1+\rho)\gamma\mu_j.
\end{equation}
Together with $d_{j,k}\leq d_{\max}<2$, this makes $d_{j,k}(2-d_{j,k})=\Theta(\gamma\mu_j)$.
Substituting into Eq.~(\ref{eq:stationary-modal-variance}) proves the low-curvature claim.
For $\lambda_k>0$, the denominator is
\begin{equation}
    d_{j,k}(2-d_{j,k})
    =
    \Theta(d_{j,k})
    =
    \Theta(\gamma\mu_j+\alpha\lambda_k),
\end{equation}
where the constants depend on $d_{\max}$.
The same identity remains true for negative $\lambda_k$ whenever $0<d_{j,k}<2$, but negative curvature decreases $d_{j,k}$ relative to $\gamma\mu_j$.
\qed

\subsubsection{Proof of Corollary~\ref{corollary:radius-control}}

\textit{Proof.}
By Parseval's identity for the orthonormal bases $V$ and $U$,
\begin{equation}
    \|\Delta\|_F^2
    =
    \sum_{j=1}^{n-1}\sum_{k=1}^d z_{j,k}^2.
\end{equation}
Remark~\ref{remark:modal-orders} gives
\begin{equation}
    \mathbb{E}[z_{j,k}^2]
    =
    \Theta\left(\frac{\alpha^2\sigma_{j,k}^2}{\gamma\mu_j}\right)
\end{equation}
for every $(j,k)\in\mathcal{S}_{\mathrm{low}}$.
Summing over the active low-curvature modes yields
\begin{equation}
    \sum_{(j,k)\in\mathcal{S}_{\mathrm{low}}}\mathbb{E}[z_{j,k}^2]
    =
    \Theta\left(\frac{\alpha^2}{\gamma}
    \sum_{(j,k)\in\mathcal{S}_{\mathrm{low}}}\frac{\sigma_{j,k}^2}{\mu_j}\right)
    =
    \Theta\left(\frac{\alpha^2}{\gamma}M_{\mathrm{low}}(\alpha,\gamma)\right).
\end{equation}
If $M_{\mathrm{low}}(\alpha,\gamma)$ is bounded above and below by positive constants in the local window, this reduces to $\Theta(\alpha^2/\gamma)$.
For every $(j,k)\in\mathcal{S}_{\mathrm{high}}^+$, $\lambda_k\geq c>0$, so
\begin{equation}
    d_{j,k}
    =
    \gamma\mu_j+\alpha\lambda_k
    \geq
    \alpha c.
\end{equation}
Since $d_{j,k}\leq d_{\max}<2$, Eq.~(\ref{eq:stationary-modal-variance}) gives
\begin{equation}
    \mathbb{E}[z_{j,k}^2]
    \leq
    C\frac{\alpha^2\sigma_{j,k}^2}{\alpha c}
    =
    O(\alpha)
\end{equation}
for a constant $C$ independent of $\alpha$.
Summing over the finite set $\mathcal{S}_{\mathrm{high}}^+$ preserves the $O(\alpha)$ rate.
If $M_{\mathrm{low}}(\alpha,\gamma)$ remains non-negligible and $\alpha^2/\gamma=\Omega(\alpha)$, the low-curvature contribution decays no faster than this high-curvature upper bound.
\qed

\subsubsection{Proof of Corollary~\ref{corollary:gamma-schedule}}

\textit{Proof.}
If $\gamma=\alpha^p$ up to a constant factor, the mixing-dominated condition
\begin{equation}
    |\alpha\lambda_k|\leq\rho\gamma\mu_j
\end{equation}
is equivalent to $|\lambda_k|\leq C\rho\mu_j\alpha^{p-1}$ for a fixed constant $C>0$.
For such modes, Remark~\ref{remark:modal-orders} gives
\begin{equation}
    \mathbb{E}[z_{j,k}^2]
    =
    \Theta\left(\frac{\alpha^2\sigma_{j,k}^2}{\alpha^p\mu_j}\right)
    =
    \Theta\left(\alpha^{2-p}\sigma_{j,k}^2/\mu_j\right).
\end{equation}
For $p=2$ this is constant order in $\alpha$ for each active mixing-dominated mode; for $p>2$ it grows in the linear stationary approximation for each such mode.
The condition $|\lambda_k|\leq O(\alpha^{p-1})$ shrinks as $\alpha$ decreases, so the statement applies only to modes that remain active in this mixing-dominated regime.
\qed

\newpage
\subsection{Additional experiments and details}

\subsubsection{Decentralized Adam with adaptive consensus}\label{app:dadam-exp}

We also validate the idea of controlling consensus errors on transformer models by simply replacing the local update with the Adam optimizer \citep{kingma2014adam}. DSGD-AC is then adapted to DAdam-AC as in Algorithm~\ref{alg:dadam-ac}.

\begin{algorithm2e}
    \caption{Decentralized Adam with adaptive consensus (DAdam-AC) on worker $i$}
    \label{alg:dadam-ac}
    \SetAlgoLined
    \KwData{Dataset ($D$), the number of workers ($N$), the number of epoch ($E$), the number of batches per epoch ($T$), intialization ($x^{(0)}$), and a hyperparameter ($p\in\mathbb{R}^+$).}

    \KwResult{Deployed model $\bar{x}=\frac{1}{n}\sum_{j=1}^n x_j^{(TE)}$}

    $x_1^{(0)}=x_2^{(0)}=\cdots=x_n^{(0)}=x^{(0)}$

    \For{$t=1$ to $TE$} {
        $g_i^{(t)}=\nabla f(x_i^{(t-1)};s_i^{(t)})$

        $m_i^{(t)}=\beta_1 m_i^{(t)} + g_i^{(t)}$

        $v_i^{(t)}=\beta_2 v_i^{(t)} + g_i^{(t)}\odot g_i^{(t)}$

        $\hat{m}_i^{(t)}=m_i^{(t)}/(1-\beta_1^t)$

        $\hat{v}_i^{(t)}=v_i^{(t)}/(1-\beta_1^t)$

        $\gamma^{(t)}=\left[\alpha^{(t)}/\alpha_{\max}\right]^p$

        $x_i^{(t)}=x_i^{(t-1)}-\alpha^{(t)}\frac{\hat{m}_t}{\hat{v}_t}+\gamma^{(t)}\sum_{j\in\mathcal{N}(i)} W_{ij}(x_j^{(t-1)}-x_i^{(t-1)})$
    }
\end{algorithm2e}

We train Transformer (the big variant, $\sim$213M parameters) \citep{vaswani2017attention} on WMT14 (English-to-German) \citep{bojar2014findings} and present the curves of training losses and BLEU scores on the test set. The BLEU scores \citep{papineni2002bleu} (which is used to evaluate the translation quality in the original paper) and the losses on the test set and the training set are reported in Table~\ref{tab:algo_comparison_wmt}.

\begin{figure}[h]
    \centering
    \includegraphics[width=.4\linewidth]{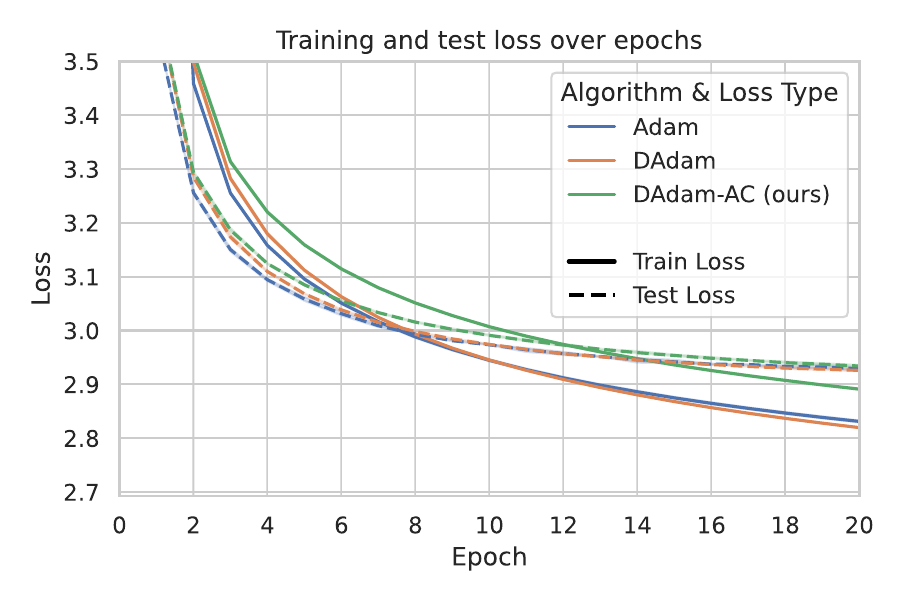}
    \includegraphics[width=.4\linewidth]{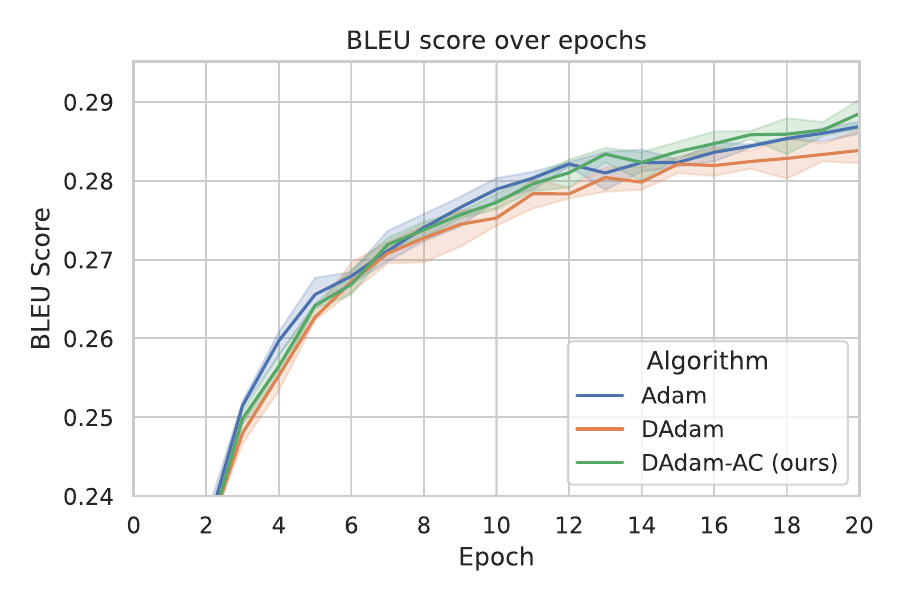}
    \caption{Transformer (big) on WMT14 English-to-German. \textbf{Left}: Losses on training set. \textbf{Right}: BLEU scores on the test set.}
    \label{fig:transformer-wmt14}
\end{figure}

\begin{table}[h]
\centering
{
\begin{tblr}{
  cells = {c},
  hline{1,5} = {-}{0.08em},
  hline{2} = {-}{0.05em},
}
\textbf{Algorithm} & \textbf{BLEU score~$\uparrow$}                        & \textbf{Test loss~$\downarrow$}                          & \textbf{Train loss~$\downarrow$}                         \\
Adam               & 28.68~\textcolor[rgb]{0,0.114,0.208}{±~}0.07          & 2.9290~\textcolor[rgb]{0,0.114,0.208}{±}~0.0026          & 2.8310~\textcolor[rgb]{0,0.114,0.208}{±}~0.0019          \\
DAdam              & 28.38~\textcolor[rgb]{0,0.114,0.208}{±}~0.22          & 2.9258~\textcolor[rgb]{0,0.114,0.208}{±}~0.0018 & \uline{2.8195~\textcolor[rgb]{0,0.114,0.208}{±}~0.0008} \\
DAdam-AC           & \uline{28.89~\textcolor[rgb]{0,0.114,0.208}{±}~0.17} & 	\uline{2.9205~\textcolor[rgb]{0,0.114,0.208}{±}~0.0020}          & 2.8456~\textcolor[rgb]{0,0.114,0.208}{±}~0.0016          
\end{tblr}}
\caption{Performance comparison of DAdam, Adam, and DAdam-AC on neural machine translation with the transformer model.}\label{tab:algo_comparison_wmt}
\end{table}

The results show that DAdam-AC can outperform other baselines on the translation quality metric and the test loss. The adaptive consensus enhances the decentralized Adam, but the improvement is relatively marginal compared with the image classification task. We believe further improvement is possible by taking adaptive consensus into account when designing the optimizer or the mixing operator (see the discussion in Section~\ref{sec:conclusion}).

\newpage 
\subsubsection{Training curves}\label{appsec:curves}

Figures~\ref{appfig:wrn28-10-cifar100-8w} to \ref{appfig:wrn16-8-cifar100-32w} present the training curves corresponding to the experiments in Table~\ref{tab:algo_comparison}. The figures show the impact of the adaptive consensus after the epoch it is activated, and the trend that DSGD-AC trades training loss for better test loss and test accuracy. Moreover, from the figures, one can observe that there is an increasing trend in test loss in the last 30-50 epochs of training. Based on the analysis in Section~\ref{sec:radius-control}, a heuristic explanation is that the regularization becomes weaker on the top eigenvalues due to the worse alignment at the late-training phase, which hinders the test performance.

\begin{figure}[H]
    \centering
    \includegraphics[width=\linewidth]{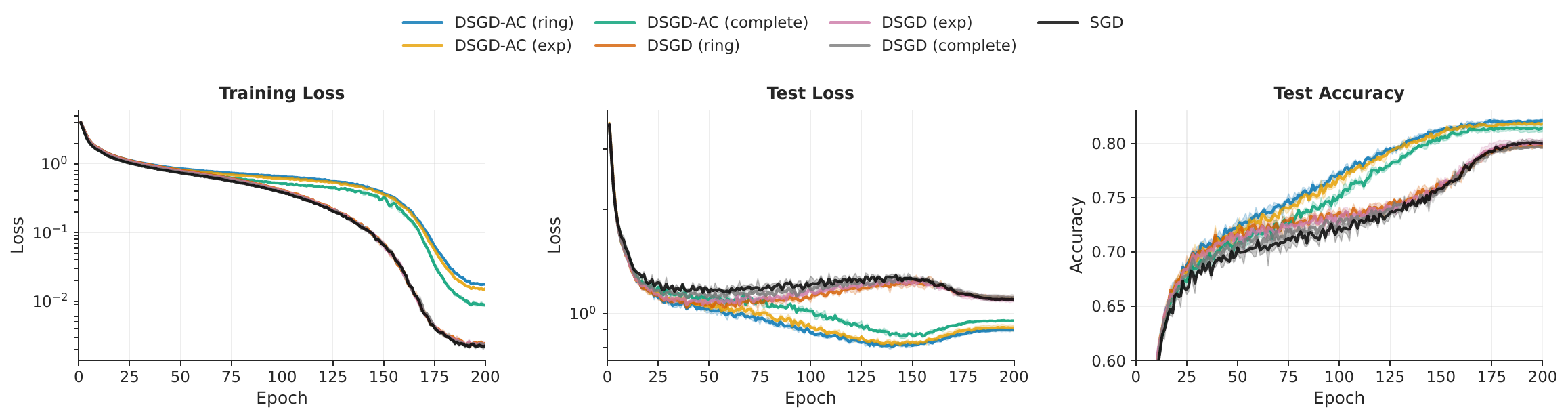}
    \caption{WRN28-10, CIFAR100, 8 workers}
    \label{appfig:wrn28-10-cifar100-8w}
\end{figure}

\begin{figure}[H]
    \centering
    \includegraphics[width=\linewidth]{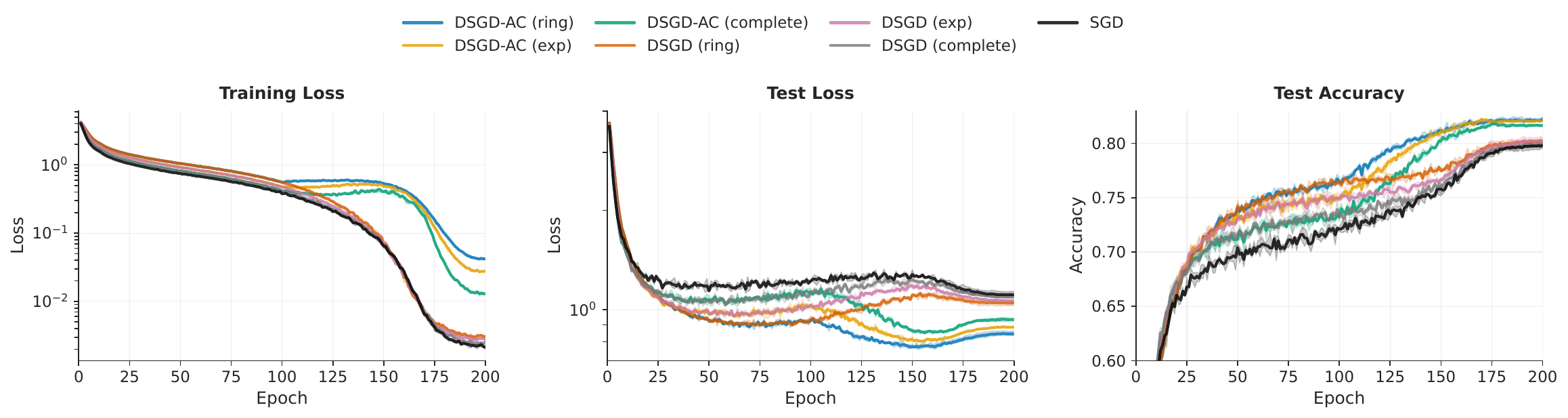}
    \caption{WRN28-10, CIFAR100, 16 workers}
\end{figure}

\begin{figure}[H]
    \centering
    \includegraphics[width=\linewidth]{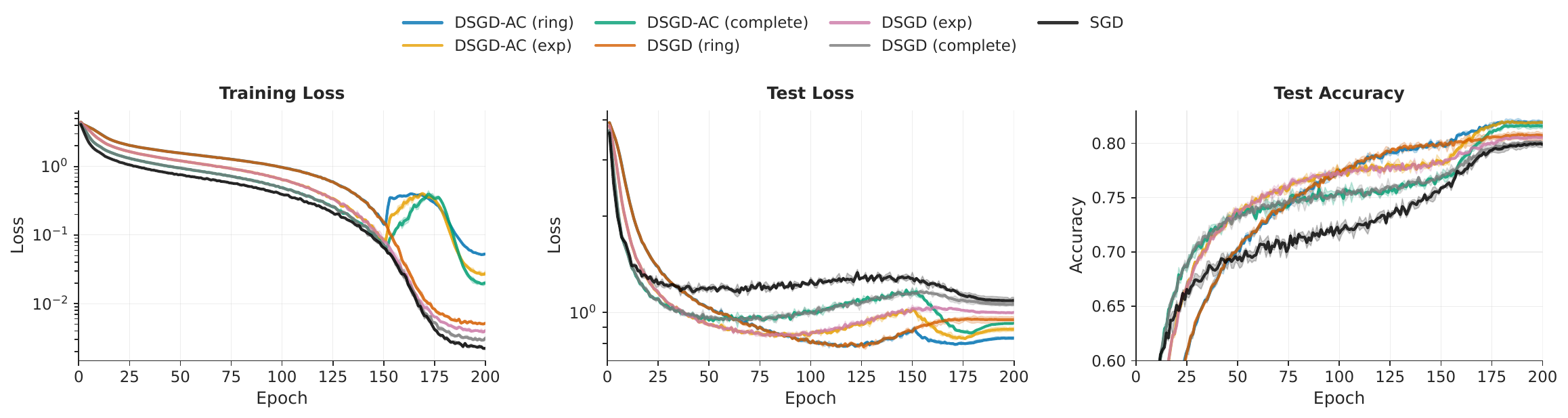}
    \caption{WRN28-10, CIFAR100, 32 workers}
\end{figure}
\newpage
\begin{figure}[H]
    \centering
    \includegraphics[width=\linewidth]{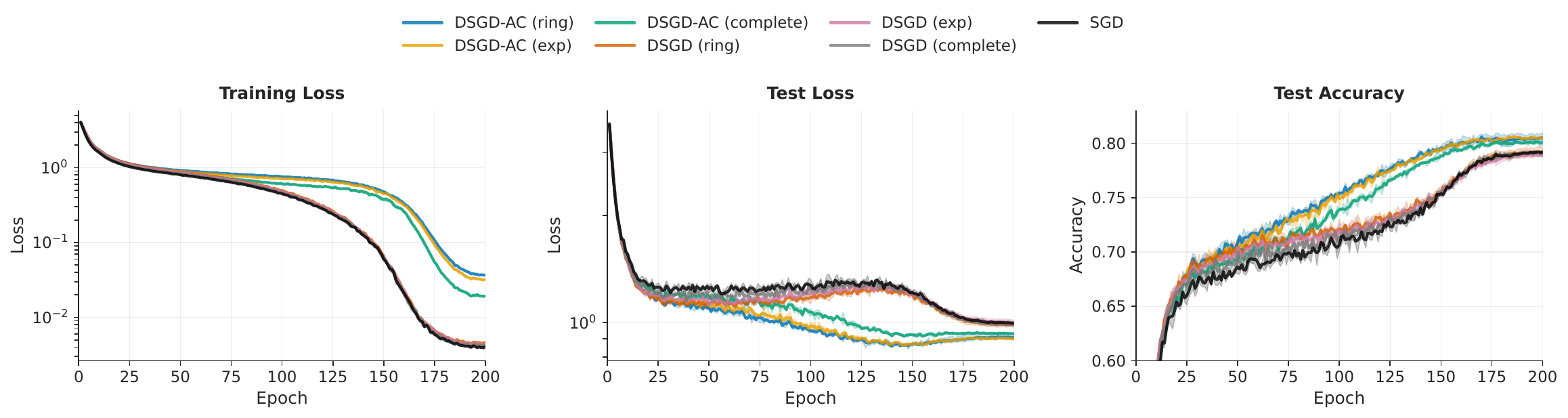}
    \caption{WRN16-8, CIFAR100, 8 workers}
\end{figure}

\begin{figure}[H]
    \centering
    \includegraphics[width=\linewidth]{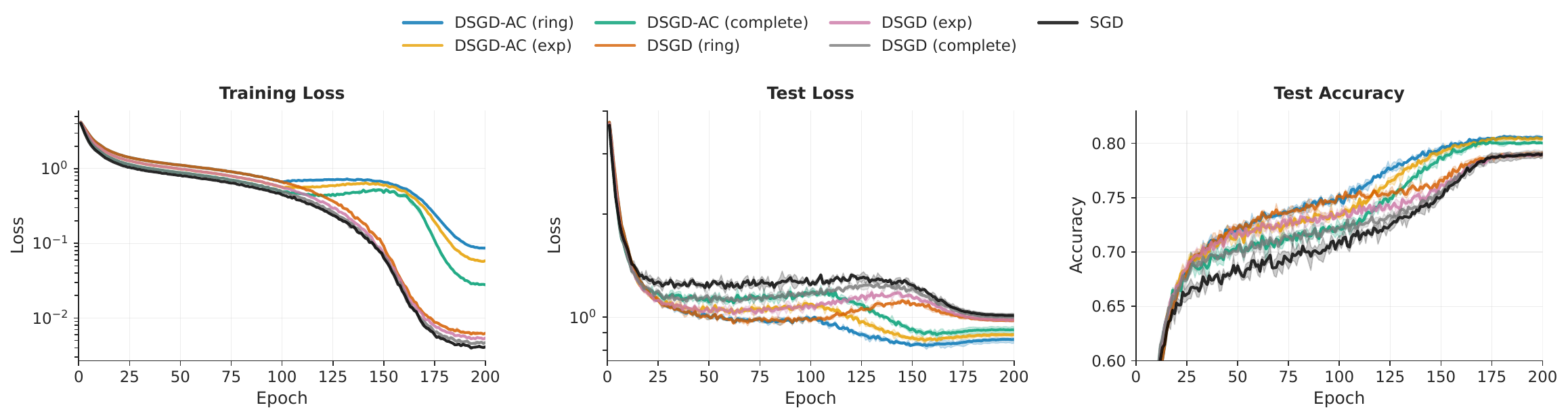}
    \caption{WRN16-8, CIFAR100, 16 workers}
\end{figure}

\begin{figure}[H]
    \centering
    \includegraphics[width=\linewidth]{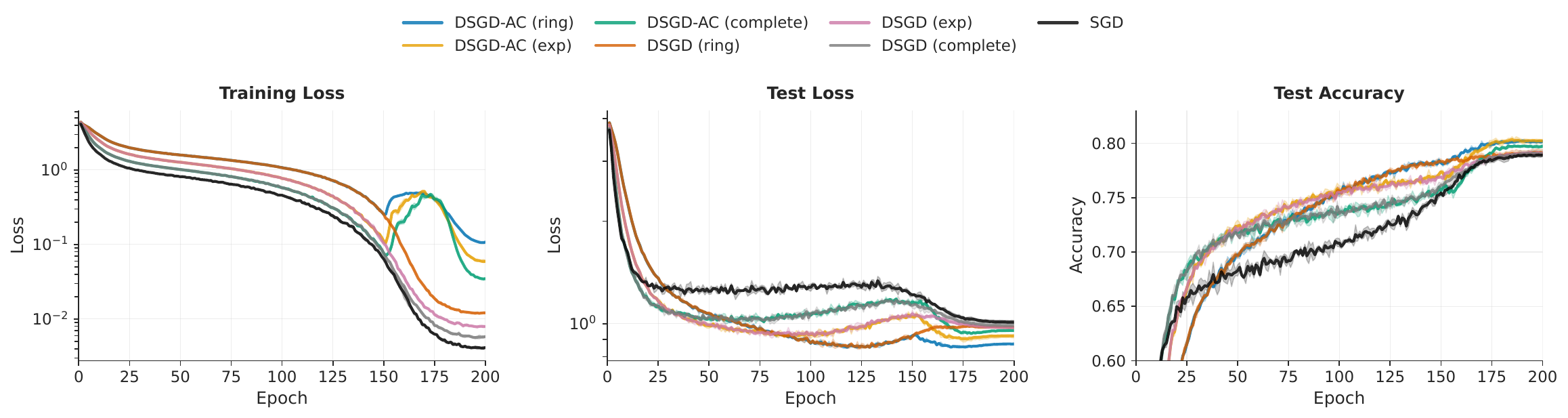}
    \caption{WRN16-8, CIFAR100, 32 workers}
    \label{appfig:wrn16-8-cifar100-32w}
\end{figure}

\newpage

\subsubsection{Sensitivity analysis}\label{appsec:sensitivity}

Figures~\ref{appfig:p-sensitivity} and \ref{appfig:start-sensitivity} show the curves of the test accuracy and the test loss of the sensitivity analysis experiments.

For $p$, a greater-than-zero $p$ gives better test performance, while the benefits diminish as $p>3$. 

$E_{\text{start}}$ is the epoch from which the adaptive consensus is activated. We set $\gamma=1$ before $E_{\text{start}}$, and make $\alpha_{\text{max}}$ equal to the maximal learning rate among the iterations where DSGD-AC is activated.

The results show that $E_{\text{start}}=100$ gives the best performance on test accuracy, and, for $E_{\text{start}}\in\{10,50,75,100\}$, they give similar test loss performance, and a too-late activation of the adaptive consensus ($E_\text{start}=\{150,175\}$) leaves too few iterations for DSGD-AC to take effect.
\begin{figure}[H]
    \centering
    \begin{subfigure}[t]{0.49\textwidth}
        \centering
        \includegraphics[width=\linewidth]{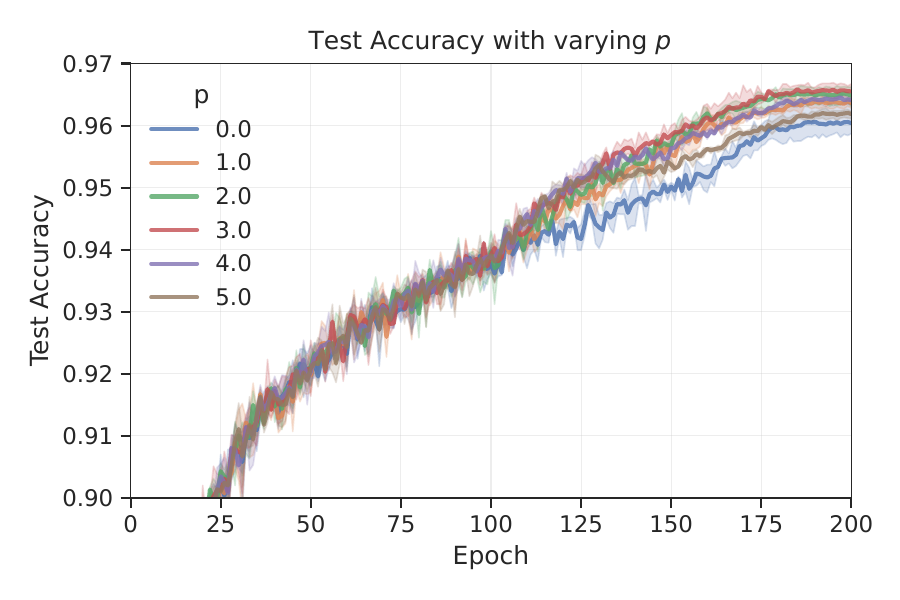}
    \end{subfigure}
    \hfill
    \begin{subfigure}[t]{0.49\linewidth}
        \centering
        \includegraphics[width=\linewidth]{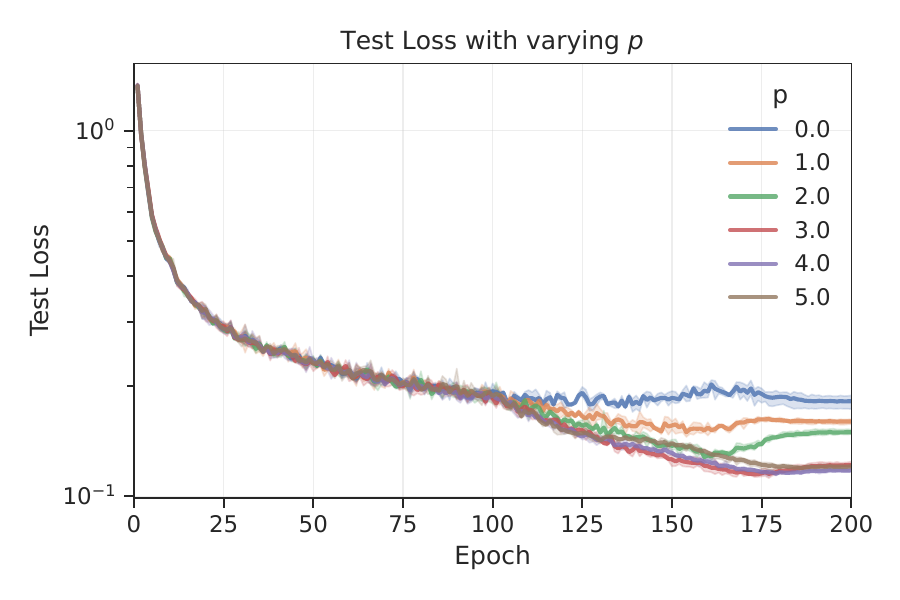}
    \end{subfigure}
    \caption{DSGD-AC with WRN28-10 on CIFAR10 with varying $p$ and $E_{\text{start}}=100$.}
    \label{appfig:p-sensitivity}
\end{figure}

\begin{figure}[H]
    \centering
    \begin{subfigure}[t]{0.49\textwidth}
        \centering
        \includegraphics[width=\linewidth]{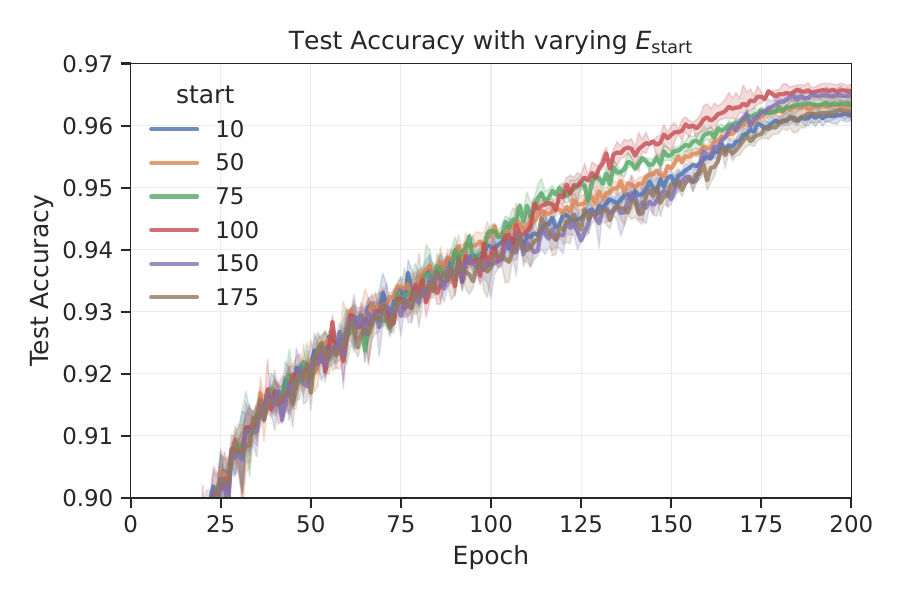}
    \end{subfigure}
    \hfill
    \begin{subfigure}[t]{0.49\linewidth}
        \centering
        \includegraphics[width=\linewidth]{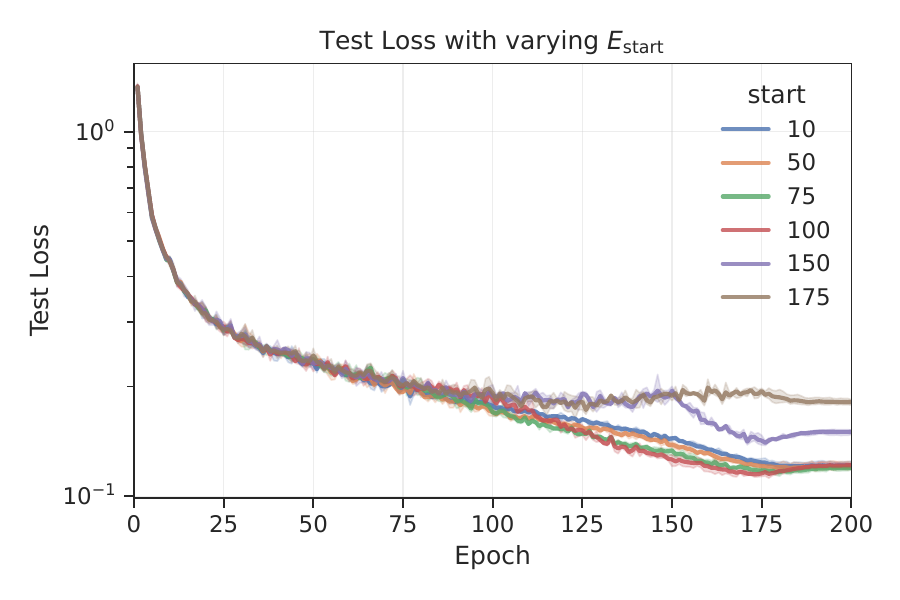}
    \end{subfigure}
    \caption{DSGD-AC with WRN28-10 on CIFAR10 with varying $E_{\text{start}}$ and $p=3$.}
    \label{appfig:start-sensitivity}
\end{figure}

\newpage
\subsubsection{Sharpness-aware minimization}\label{appsec:sam}

We implement SAM \citep{foret2020sharpness} and report its results with 8 and 16 workers (1 and 2 8$\times$T4 nodes) for reference. We follow their work to use $\rho=0.05$ in all the experiments. The results show that SAM achieves better metric values than DSGD-AC. However, it should be noted that SAM introduces $2\times$ computation cost, and it is not numerically stable under the mixed precision training in our setup (the loss becomes NaN in around 30 epochs with AMP in fp16). Therefore, it takes $\sim4\times$ training time as the synchronous SGD in the 8-worker and 16-worker setups under the same number of iterations. Even under a roughly the same FLOP budget (100 epochs), SAM still takes around $2\times$ training time, and the accuracies are comparable to DSGD-AC.

\begin{table}[H]
\centering
{
\begin{tblr}{
  cells = {c},
  hline{1,6} = {-}{0.06em},
  hline{2} = {-}{0.04em},
}
\textbf{\# Nodes}  &  \textbf{Epochs} & \textbf{Test Acc. (\%) $\uparrow$}  & \textbf{Test Loss~$\downarrow$}  & \textbf{Train Loss~$\downarrow$} & \textbf{Training Time (min) $\downarrow$} \\
1      & 200      &    83.04 {\tiny± 0.22} & 0.700 {\tiny± 0.007}  & 0.0519 {\tiny± 0.0003} & 382.14 {\tiny± 8.32}      \\
2      & 200      &    83.00 {\tiny± 0.03} & 0.697 {\tiny± 0.004}  & 0.0517 {\tiny± 0.0003}  & 193.97 {\tiny± 1.18} \\
1      & 100      &    82.49 {\tiny± 0.09} &  0.692 {\tiny± 0.002} & 0.0819 {\tiny± 0.0007} & 196.10 {\tiny± 1.33}      \\
2      & 100      &    82.48 {\tiny± 0.10} & 0.685 {\tiny± 0.005}  & 0.0836 {\tiny± 0.0007}  & 96.61 {\tiny± 0.78}
\end{tblr}}
\vspace{0.5em}
\caption{Performance of SAM on image classification tasks with WRN28-10 on CIFAR-100.}\label{tab:sam-results}
\end{table}

\newpage
\subsection{Hyperparameter details}\label{app:hyper-details}

\subsubsection{Hyperparameters for image classification experiments with WRN on CIFAR10/100}

The selection of hyperparameters follows the original paper \citep{zagoruyko2016wide}, and our baseline implementation is consistent with its performance.

\begin{table}[h]
\centering
\resizebox{0.85\linewidth}{!}{%
\begin{tabular}{p{4cm} p{10cm}}
\toprule
\textbf{Category} & \textbf{Setting} \\
\midrule
\multicolumn{2}{l}{\textbf{\textit{General}}} \\
Number of epochs & 200 \\
Global batch size & 128 for 8-worker setup and it is linearly scaled with the number of workers. \\
Learning rate scheduler & Linearly warm-up to $\alpha_{\max}$ in the first 10 epochs, followed by the cosine annealing until the end. $\alpha_{\max}=0.1$ for 128 batch size, and it is linearly scaled with the batch size. \\
Base optimizer & SGD with momentum $\beta=0.9$ and weight decay $5\times 10^{-4}$. \\
Data shuffle & Randomly shuffled and split into $N$ local datasets each epoch. \\
\midrule
\multicolumn{2}{l}{\textbf{\textit{Decentralized training}}} \\
Number of workers & 8, 16, and 32 \\
Communication topology & One-peer ring (alternating between neighbors $i-1$ and $i+1$ across iterations), one-peer exponential \cite{ying2021exponential}, and complete graph (global all-reduce among all workers). \\
DSGD-AC parameters & Exponent $p=3$; $\gamma=1$ before $E_\text{start}$. \\
BatchNorm calibration & Similar to the case in \cite{defazio2024road}, a calibration on the BatchNorm statistics is needed because there is a mismatch between the local models and the global average. To calibrate mismatched statistics, a full pass over the training set is conducted before validation. Only one calibration should be done if intermediate checkpoints are not evaluated. Note that we also apply the calibration to synchronous SGD for a fair comparison.\\
\bottomrule
\end{tabular}
}
\end{table}

\subsubsection{Hyperparameters for neural machine translation experiments with Transformer on WMT14}

The selection of hyperparameters follows the original paper \citep{vaswani2017attention}, and our baseline implementation matches its performance. We tuned the learning rate and $(\beta_1,\beta_2)$ separately for decentralized methods.

\begin{table}[h]
\centering
\resizebox{0.85\linewidth}{!}{%
\begin{tabular}{p{4cm} p{10cm}}
\toprule
\textbf{Category} & \textbf{Setting} \\
\midrule
\multicolumn{2}{l}{\textbf{\textit{General}}} \\
Number of epochs & 20 \\
Global batch size & $\sim$50k tokens including both source and target texts \\
Learning rate scheduler & Linear warm-up to $5\times 10^{-4}$ over the first 4000 iterations, then decay as $\alpha_0\cdot(4000/t)^{0.5}$ ($t$ is the iteration index). $\alpha_0=0.0005$ for centralized Adam, and $\alpha_0=0.0013$ for decentralized methods. \\
Base optimizer & Adam ($\beta_1=0.9, \beta_2=0.98$) for centralized Adam, and ($\beta_1=0.974$, $\beta_2=0.999$) for decentralized methods. \\
Data shuffle & Randomly shuffled and split into $N$ local datasets each epoch \\
\midrule
\multicolumn{2}{l}{\textbf{\textit{Decentralized training}}} \\
Number of workers & 8 \\
Communication topology & One-peer ring (alternating between neighbors $i-1$ and $i+1$ across iterations) \\
DSGD-AC parameters. & Exponent $p=2$ (tuned based on experiments); $\gamma=1$ during warm-up. \\
Normalization & Since only layer normalization is used, no calibration is needed. \\
\bottomrule
\end{tabular}
}
\end{table}

\subsubsection{Evaluation details}\label{appsec:eval-details}

\paragraph{Eigenvalue evaluation} We use the Lanczos algorithm \citep{lanczos1950iteration} for efficient evaluation of the max and min eigenvalues of the Hessian. To be specific, the loss function is
\begin{equation}\label{appeq:true-loss}
    F(x)=\frac{1}{B}\sum_{b=1}^B f(x;s_b)
\end{equation}
where $s_b$ denotes a batch of samples from the training set, $\mathcal{S}$ denotes the total number of samples in the training set, and $\{s_1,\cdots,s_B\}$ collectively cover the entire training set. To exclude the impact of mismatched statistics in the batch normalization layers and to reflect the actual training loss, we evaluate the Hessian-vector product in training mode and control the random seed to ensure that the sample order, batching, and data augmentation are identical across all evaluations. In our evaluation, we fix the mini-batch size to 32.

We fix the number of Lanczos iterations to 30.

\paragraph{Alignment evaluation} The Hessian quadratic with any vector can be computed via the dot product between the consensus error and the Hessian vector product, and the Hessian vector product (HVP) is computed using the Pearlmutter trick without materializing the Hessian, which is supported out-of-the-box by the automatic differentiation in PyTorch.

\begin{equation}
    D:= \nabla F(x)^\top v, \quad \frac{\partial D}{\partial x}= H(x)v
\end{equation}
where $F(x)$ is defined in the same way as in Eq. (\ref{appeq:true-loss}), and, since $F(x)$ is a linear combination of loss functions on batches, the HVP can be computed in a distributed way for better efficiency.

We use 50 Hutchinson trace probes to evaluate the alignment between the Hessian and random directions in our showcase runs, and the standard deviations are on the order of $10^{-7}$.

\paragraph{Gradient noise alignment evaluation} The gradient noises and their alignment with the Hessian are computed by
\begin{equation}
    \xi_i := \nabla f(x;s_i) - \frac{1}{B}\sum_{b=1}^B \nabla f(x;s_b),\quad A_{\textrm{grad}}:= \frac{1}{B}\sum_{b=1}^B \xi_b^\top H \xi_b / \lambda_1(H)
\end{equation}
where the first step is simply computing the difference between the mini-batch loss and the full-batch loss, and the second step is computed via HVP as in the alignment evaluation.

\end{document}